\let\oldtitle\title
\renewcommand\title[1]{%
    \begingroup
        \providecommand{\ttlit}{}%
        \renewcommand{\ttlit}[1]{}%
        \providecommand{\titlenote}{}%
        \renewcommand{\titlenote}[1]{}%
        \hypersetup{pdftitle={#1}}%
        \def\thetitle{#1}%
        \pdfbookmark[0]{#1}{title}
    \endgroup
    \oldtitle{#1}%
}
\def\citet{\@ifstar{\citetstar}{\citetnostar}}
\def\Citet{\@ifstar{\Citetstar}{\Citetnostar}}
\def\citetnostar{\@ifnextchar[{\squarecitet}{\simplecitet}}
\def\squarecitet[#1]{\@ifnextchar[{\twocitet[#1]}{\onecitet[#1]}}
\def\Citetnostar{\@ifnextchar[{\squareCitet}{\simpleCitet}}
\def\squareCitet[#1]{\@ifnextchar[{\twoCitet[#1]}{\oneCitet[#1]}}
\def\citetstar{\@ifnextchar[{\squarecitetstar}{\simplecitetstar}}
\def\squarecitetstar[#1]{\@ifnextchar[{\twocitetstar[#1]}{\onecitetstar[#1]}}
\def\Citetstar{\@ifnextchar[{\squareCitetstar}{\simpleCitetstar}}
\def\squareCitetstar[#1]{\@ifnextchar[{\twoCitetstar[#1]}{\oneCitetstar[#1]}}
\def\simplecitet#1{\citeauthor{#1}~\citep{#1}}
\def\onecitet[#1]#2{\citeauthor{#2}~\citep[#1]{#2}}
\def\twocitet[#1][#2]#3{\citeauthor{#3}~\citep[#1][#2]{#3}}
\def\simplecitetstar#1{\citeauthor*{#1}~\citep{#1}}
\def\onecitetstar[#1]#2{\citeauthor*{#2}~\citep[#1]{#2}}
\def\twocitetstar[#1][#2]#3{\citeauthor*{#3}~\citep[#1][#2]{#3}}
\def\simpleCitet#1{\Citeauthor{#1}~\citep{#1}}
\def\oneCitet[#1]#2{\Citeauthor{#2}~\citep[#1]{#2}}
\def\twoCitet[#1][#2]#3{\Citeauthor{#3}~\citep[#1][#2]{#3}}
\def\simpleCitetstar#1{\Citeauthor*{#1}~\citep{#1}}
\def\oneCitetstar[#1]#2{\Citeauthor*{#2}~\citep[#1]{#2}}
\def\twoCitetstar[#1][#2]#3{\Citeauthor*{#3}~\citep[#1][#2]{#3}}
\newcommand{\AMShreffix}[1]{%
        \expandafter\let\csname AMShreffix#1\expandafter\endcsname%
                \csname #1\endcsname%
        \expandafter\renewcommand\csname #1\endcsname{%
                \@hyper@itemfalse\csname AMShreffix#1\endcsname}}
\newcommand{\overbar}[1]{\mkern 1.5mu\overline{\mkern-1.5mu#1\mkern-1.5mu}\mkern 1.5mu}
\newcommand{\pmon}{p_{\mathrm{mon}}}
\newcommand{\mon}{\mathrm{mon}}
\newcommand{\ssimpnf}{\ssimpn_{\mathrm{full}}}
\newcommand{\gensimp}{\mathcal{S}}
\newcommand{\genpoly}{\mathcal{P}}
\newcommand{\hyp}[1] {\mathcal{H}_{#1}}
\newcommand{\vm}{\mathbb{Ver}}
\newcommand{\Vol}{\mathrm{Vol}} 
\newcommand{\favn} {\mathcal{F}}
\newcommand{\unfav} {\mathcal{\overbar{F}}} 
\newcommand{\unfavn} {\unfav}
\newcommand{\hypn}{\mathcal{H}}
\newcommand{\slen}{\mathrm{slen}}
\newcommand{\upos}[1]{x_{#1}}
\newcommand{\pos}[1]{  \mathtt{x}_{#1}}
\newcommand{\uposvec}{\mathbf{x}}
\newcommand{\posvec}{\mathbf{\mathtt{x}}}
\newcommand{\psimpn} {\mathcal{P}} 
\newcommand{\ssimpcf}{\mathcal{S}_{\rm{CF}}}
\newcommand{\rvupos}[1] {X_{#1}}
\newcommand{\rvuposvec} {\mathbf{X}}
\newcommand{\rvpos}[1] {\mathtt{ X}_{#1}}
\newcommand{\rvsvec} {\mathbf{S}}
\newcommand{\jpos} { f_{\mathtt{X}} (\posvec)}
\newcommand{\mpos}[1] {f_{\mathtt{X}_{#1}} (\pos{#1})}   
\newcommand{\msla}[1] {f_{S_{#1}} (\si{#1})}
\newcommand{\avec}{\mathbf{a}}
\newcommand{\G} {\mathcal{G}}
\newcommand{\fpar}{g}
\newcommand{\Fpar}{G}
\newcommand{\Poi}{\mathrm{Poi}}
\newcommand{\sla} {s} 
\newcommand{\si}[1]{s_{#1}}
\newcommand{\svec} {\mathbf{s}}
\newcommand{\orvsi}[1]{ \mathtt{S}_{#1}}
\newcommand{\orsvec}{\mathbf{\mathtt{S}}}
\newcommand{\Bo}{\mathcal{B}}
\newcommand{\fsi}[1] {\tilde{s}_{#1}}
\newcommand{\fsla}{\tilde{s}}
\newcommand{\fsvec}{\tilde{\svec}}
\newcommand{\ssimpn}{\mathcal{S}}
\newcommand{\ssimp}{\mathcal{S}}
\newcommand{\rvsla}[1] {S_{#1}}
\newcommand{\jsla} {f_{\mathbf{S}} (\mathbf{s})}
\newcommand{\rvfsla}[1] {\tilde{\rvsla{i}}}
\newcommand{\rvfpos}[1] {\tilde{\rvpos{i}}}
\newcommand{\nmin}{n_{\min}}
\newcommand{\eye}{\mathbb{I}}
\newcommand{\0}{\mathbf{0}}
\newcommand{\1}{\mathbf{1}}
\newcommand{\rad}{r}
\newcommand{\dia}{D}
\newcommand{\nor}{n}
\newcommand{\sen}{\mathrm{sen}}
\newcommand{\psen}{p_{\mathrm{sen}}}
\newcommand{\sat}{\con}
\newcommand{\con}{d}
\newcommand{\pcon} {p_{\mathsf{con}}}
\newcommand{\Rob}{\mathcal{R}}
\newcommand{\R}{\mathbb{R}}
\newcommand{\vvec}{\mathbf{v}}
\newcommand{\ind} {\mathbb{1}}
\newcommand{\Exp} {\mathbb{E}}
\newcommand{\Var} {\mathbb{V}}
\newcommand{\todo}[1] {{\color{magenta} #1} }
\newcommand{\Prob}{\mathbb{P}}
\newcommand{\cmp}{\mathrm{cmp}}
\newcommand{\dgr}{\mathrm{deg}}
\newcommand{\Nmon}{\mathrm{N}_\mon}
\newcommand{\Ncon}{\mathrm{N}_{\rm con}}
\newcommand{\Nsen}{\mathrm{N}_{\rm sen}}
\newtheorem{theorem}{Theorem}
\newtheorem{lemma}{Lemma}
\def\hlinewd#1{%
\noalign{\ifnum0=`}\fi\hrule \@height #1 %
\futurelet\reserved@a\@xhline}
\begin{document}
%
\title{Design of Stochastic Robotic Swarms for Target Performance Metrics in Boundary Coverage Tasks}


\author{\IEEEauthorblockN{Ganesh P. Kumar and
        Spring Berman
        }
     {
	 School for Engineering of Matter, Transport and Energy, Arizona State University, 
	Tempe, AZ 85287 USA }
}

\markboth{}
{Shell \MakeLowercase{\textit{et al.}}: Bare Demo of IEEEtran.cls for Journals}

\IEEEtitleabstractindextext{%
\begin{abstract}

In this work, we analyze \textit{stochastic coverage schemes} (SCS) for robotic swarms in which the robots randomly attach to a one-dimensional boundary of interest using local communication and sensing, without relying on global position information or a map of the environment.   Robotic swarms may be required to perform boundary coverage in a variety of applications, including environmental monitoring, collective transport, disaster response, and nanomedicine. We present a novel analytical approach to computing and designing the statistical properties of the communication and sensing networks that are formed by random robot configurations on a boundary.  We are particularly interested in the event that a robot configuration forms a connected communication network or maintains continuous sensor coverage of the boundary.  Using tools from  order statistics, random geometric graphs, and computational geometry, we derive formulas for properties of the random graphs generated by robots that are independently and identically distributed along a boundary.  We also develop order-of-magnitude estimates of these properties based on Poisson approximations and threshold functions.  For cases where the SCS generates a uniform distribution of robots along the boundary, we apply our analytical results to develop a procedure for computing the robot population size, diameter, sensing range, or communication range that  yields a random communication network or sensor network with desired properties.
\end{abstract}

\begin{IEEEkeywords}
Boundary coverage, distributed robot systems, swarm robotics, stochastic robotics, randomized algorithms
\end{IEEEkeywords}}

\maketitle

\IEEEdisplaynontitleabstractindextext

\section{Introduction}
\label{Sec:Intro}
Robotic swarms have the potential to collectively perform tasks over very large domains and time scales, succeeding even in the presence of failures, errors, and disturbances.  Low-cost miniature autonomous robots for swarm applications are currently being developed as a result of recent advances in computing, sensing, actuation, power, control, and manufacturing technologies.  In addition, micro-nanoscale platforms such as DNA machines, synthetic bacteria, magnetic materials, and nanoparticles are being designed for biomedical and manufacturing applications \cite{Mavroidis2013,Diller2013,bauer201425th}, in which they would need to be deployed in massive numbers.  Swarm robotic platforms have limited onboard power that support only local sensing capabilities and local inter-robot communication and may preclude the use of GPS, or they may operate in GPS-denied environments.

In this work, we consider {\it boundary coverage} (BC) tasks for swarms of such resource-constrained robots.  We define a boundary coverage scheme (BCS) as a process by which multiple robots autonomously allocate themselves around the boundary of an object or region of interest.   Robotic swarms may be required to perform boundary coverage for mapping, exploration, environmental monitoring, surveillance tasks such as perimeter patrolling, and disaster response tasks such as cordoning off a hazardous area or extinguishing a fire.  Another motivating application is collective payload transport \cite{KubeBonabeau00,OPGCMBD09,Chen2015} for automated manipulation and assembly in uncertain, unstructured environments.  Furthermore, boundary coverage behaviors will need to be controlled in micro- and nanoscale systems that are designed for micro object manipulation, molecular imaging, drug and gene delivery, therapeutics, and diagnostics \cite{Tong2013, Hauert2013}. For example, nanoparticles that are designed for drug delivery and imaging can be coated with ligands and antibodies in a specific way to facilitate selective binding to tumor cell surfaces \cite{Sinha2006, Wang2012}.

We focus on \textit{stochastic coverage schemes} (SCS), in which the robots occupy {\it random} positions along a boundary.  Since swarm robotic platforms cannot perform precise navigation and localization, randomness in their motion will arise from noise due to sensor and actuator errors.  Even if the robots attempted to position themselves at equidistant locations along the boundary, noise in their odometry would introduce uncertainty into their resulting positions, such that each one would be distributed according to a Gaussian \cite{long2013banana}.  In addition, the robots may only encounter the boundary through local sensing during exploration of an unknown environment, which will introduce uncertainty in the locations of their interactions with the boundary.  In swarm applications at the nanoscale, the effects of Brownian motion and chemical interactions will contribute further sources of stochasticity in boundary coverage.

We address the problem of designing parameters of a robotic swarm that will produce {\it desired statistical properties} of the communication and sensing networks that are formed by random robot configurations around a boundary.  These parameters include the swarm population size and each robot's physical dimensions and sensing and communication radii, which we assume are identical for all robots.  The desired properties pertain to the distribution of robots around a boundary that result from an SCS; here we do not consider the process by which the robots arrive at the boundary.

The novelty of our approach lies in our integration of a variety of analytical tools to characterize properties of SCS, as well as our application of these results to design robotic swarms for desired SCS properties.  We devise a geometric approach to compute properties of the random graph generated by robots that have attached to the boundary independently of one another, in the case where robots may overlap.  We adapt this approach to the case where robots avoid conflicts, i.e. they do not overlap with each other on the boundary.   We derive both closed-form expressions  of SCS properties and estimates of these properties based on Poisson approximations and threshold functions for Random Geometric Graphs (RGGs).  We combine these results to develop a new design procedure for computing parameters of robotic swarms that are guaranteed to achieve a specified SCS property.

The paper is structured as follows.  We review related literature in \autoref{Sec:RelWork} and define the SCS properties that we seek to compute and our problem statement in \autoref{Sec:Problem}.  We introduce relevant mathematical concepts in \autoref{Sec:Bground} and provide formal definitions of the SCS properties in \autoref{Sec:Covprops}.   In \autoref{Sec:Geom}, we summarize and extend a computational geometric formulation of SCS, first presented in our work \cite{KumarICRA2014}, that forms the basis of our subsequent computations.  In \autoref{Sec:Unif} and \autoref{Sec:UnifCF}, we derive formulas for SCS properties in cases where robots are distributed uniformly randomly on a boundary.  We develop order-of-magnitude estimates for these formulas in \autoref{Sec:Estim}.  In \autoref{Sec:Design}, we apply our results to compute the number of robots that should be used in a particular boundary coverage scenario to yield desired SCS properties.   \autoref{Sec:NonUnif} extends the analysis of \autoref{Sec:Unif} and \autoref{Sec:UnifCF} to general non-uniform distributions of robots on a boundary. Finally, \autoref{Sec:Conc} concludes the paper and discusses topics for future work.

\section{Related Work}  
\label{Sec:RelWork}

\subsection{Models of Adsorption Processes}

Our boundary coverage approaches are mechanistically similar to {\it adsorption}, the process of particles binding to a surface for an amount of time that varies with system thermodynamic parameters such as density, temperature, and pressure.   The resulting equilibrium distribution of particles on the surface also varies with these thermodynamic properties. We could emulate \emph{Langmuir adsorption}~\cite{Langmuir1918} by programming the robots to bind (adsorb) to the boundary with some probability and then spontaneously unbind (desorb) after a particular mean residence time.   For instance, Langmuir processes have been used to design nanoparticles that selectively target cell surfaces with high receptor densities \cite{Wang2012}.  However, achieving a target equilibrium robot distribution around a boundary would require strict control over the number of robots and characterization of the thermodynamic variables of the environment (such as tumor tissue).  

Alternatively, we could emulate \emph{random sequential adsorption}~(RSA)~\cite{Evans93, TTVV00} by implementing a Langmuir adsorption process in which unbinding occurs at a much greater time scale than binding.  The resulting robot allocation around the boundary will saturate at some suboptimal packing, at which point there is no room along the boundary for any additional robots.   This approximately irreversible binding of robots to the boundary does not permit control of the equilibrium robot distribution.
R\'{e}nyi showed in \cite{Renyi58} that when particles attach sequentially at random locations along a line without overlap, then the limiting fraction $0.747$ of the line's total length will be occupied by particles.  This fraction, called the \emph{parking constant}~\cite{SolomonWeiner86, Finch03}, defines the maximum degree of boundary coverage that is possible using an RSA strategy. 

\subsection{Stochastic Coverage Strategies for Robotic Swarms}
\label{Subsec:BC}

In our previous work \cite{PavlicISRR2013, PavlicJSDSC14}, we developed a boundary coverage approach for robotic swarms that combines classical RSA with a second irreversible process in which free robots can catalyze the detachment of bound robots.  The robot-catalyzed detachment behavior allows us to achieve any fraction of boundary coverage between 0 and the parking constant.  This coverage is robust to environmental variations such as the number of regions and the swarm size.  In addition, the approach does not require characterizing the rates at which robots encounter boundaries and other robots, which can often be done only through simulation~\cite{Gurarie08}.  In \cite{WilsonSwarmInt2014}, we adapted our  coverage strategy to mimic the behaviors of ants performing collective transport, which we had previously modeled as a stochastic hybrid dynamical system \cite{KumarSHS2013}.  

This prior work focused on designing robot probabilities of attachment and detachment that would achieve target fractions of coverage around multiple boundaries.  In \cite{KumarICRA2014}, we investigated the spatial distributions of robots that attach randomly (without detachment) to a single boundary.  We computed the probability distributions of robot positions along the boundary and distances between adjacent robots. We also derived the probability that a random robot configuration  is {\it saturated}, meaning that each adjacent pair of robots along the boundary lies within a threshold distance, which may represent the diameter of a robot's sensing or communication range.  In the current paper, we build on this prior analysis to develop {\it design procedures} for computing robot parameters that will achieve not only a specified probability of saturation, but also several other statistics related to sensor coverage and communication connectivity around a boundary.

Our focus on stochastic policies is distinct from many previous control approaches to decentralized multi-robot boundary coverage, in which the goal is to drive robots to geometric formations on a circle \cite{Wang2014}.  
However, there is a significant body of work on modeling robotic swarms with stochastic behaviors and controller synthesis for desired collective tasks in such systems \cite{Brambilla2013,CorrellHamann2015}.  Encounter-dependent task allocation strategies are most similar to our stochastic coverage problem, but previous work either deals with scenarios where encountered objects or regions are relatively small (on the scale of one to several robots) \cite{Martinoli04, ref:Labella06, Kanakia2014} or where large objects are covered dynamically by the robots \cite{Correll07}.  In contrast to this work, we address a {\it static} stochastic coverage scenario in which the encountered object or region is {\it large} compared to the robots.  Other related work has addressed the specific problem of optimal mobile sensor deployment along a line with respect to a scalar density field, possibly in the presence of measurement noise \cite{Davison2015Line}, sensor failures \cite{Frasca2015}, and packet loss \cite{Cortes2012}.  

\subsection{Analysis of Graphs and Mobile Networks}
\label{Subsec:Tools}

We use a variety of geometric and probabilistic tools in this paper, which we review in \autoref{Sec:Bground}.  We apply the theory of random graphs, primarily results on thresholds and sharp thresholds from \cite{friezeRandomGraphs}. The graph representing the communication network of a robot team is a \textit{Random Geometric Graph} (RGG), which is discussed extensively in \cite{Matthew2003RGG}. Since many of the formulas for the connectivity, vertex degrees, and components of RGGs are unwieldy, it is more fruitful to determine trends of these properties using Poisson approximation \cite{Matthew2003RGG, barbour1992poisson}. When robots cover a boundary, their communication network can be modeled as a probability density function over a polytope. The quantitative results established in this paper draw from two disparate areas of literature: \textit{order statistics} \cite{ostat2003} and polytope volume computation \cite{DyerFrieze, dyer1991computing}. 

Our work uses results from connectivity analysis of mobile adhoc networks (MANETs), which has a large literature, e.g. \cite{franceschetti2003percolation,xue2004number,hekmat2003degree}. The work in\cite{haenggi2012stochastic} models wireless networks using \textit{Poisson Point Processes} (PPPs).   We use results from \cite{godehardt1996connectivity} in \autoref{Sec:Unif} to count connected components of the  communication network. Our geometric approach differs from the primarily combinatorial one of \cite{godehardt1996connectivity}, although they lead to the same formulas. Combinatorial approaches work well for determining coverage properties when robot positions on the boundary are distributed according to a uniform parent distribution, but they do not extend to non-uniform parents.  Our geometric approach, however, does extend to non-uniform parents. This adaptability comes at the expense of requiring more labor to derive formulas for the graph properties of interest in the uniform case, compared to combinatorial approaches.

\section{Problem Statement}

\label{Sec:Problem}
In this section, we formalize the multi-robot boundary coverage problems that we address. \autoref{Tab:Notation} summarizes the notation used in this paper.   

\begin{table}
\centering
\caption{Notation}
\setlength{\tabcolsep}{1pt}
\begin{tabular}{ll} 
\toprule
\vspace{.5mm}\\
\textbf{Variable} & \textbf{Meaning} \\
\vspace{.5mm}\\
$\upos{i}$ \textbar~$\pos{i}$ & $i$-th Unordered \textbar ~Ordered robot position \\
$\si{i}$ & $i$-th slack \\ 
$\uposvec$ \textbar~$\posvec$ & Unordered \textbar~ Ordered position vector \\
$\svec$ & Slack vector \\
$\psimpn$ \textbar~$\ssimpn$ & Position simplex \textbar~ Slack simplex \\
$\jpos$ \textbar~$\mpos{i}$ & Joint \textbar ~Marginal pdf of ordered positions \\
$\rvpos{i}$ \textbar~$\rvsla{i}$ & Random variables of $i$-th position \textbar~ $i$-th Slack \\
$\jsla$ \textbar~$\msla{i}$ & Joint \textbar ~Marginal pdf of slacks \\
$\hyp{}$ & Hypercuboid \\
$\favn{}$ & Simplex - Hypercuboid intersection region \\
$\vvec$ & Bit vector of the form $\{0,1 \}^n$; e.g. $00110$ \\
$\vm(\mathcal{A})$ & Vertex matrix of polytope $\mathcal{A}$ \\
$\mathbf{e}_i$ & Unit vector along $i$-th coordinate axis \\
\vspace{.5mm} \\
\textbf{Notation} & \textbf{Meaning} \\
\vspace{.5mm} \\
$\mathbf{a}_{i:j}$ & Subvector $(a_i,a_{i+1},\ldots,a_{j})$ of $\mathbf{a}$ \\
$\mathbf{a} \geq \mathbf{b}$ \textbar $\mathbf{a} \leq \mathbf{b}$ & For all $i$, $a_i \geq b_i$ \textbar $a_i \leq b_i$  \\
$\R_+$ & Nonnegative reals \\
$\Vol(f(\mathbf{x}),\Omega)$ & Volume under function $f(\mathbf{x})$ over region $\Omega$ : $\int_{\Omega} f(\mathbf{x}) \mathrm{d}\mathbf{x}$ \\
$\ind_{X>a}$ &  Indicator random variable (rv) for the predicate $X>a$ \\
\vspace{.5mm} \\
\toprule
\end{tabular}
\label{Tab:Notation}
\end{table}

    
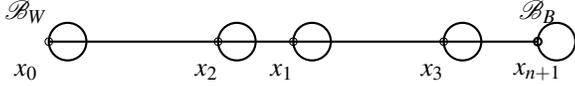
\begin{figure}   
\begin{tikzpicture}

\coordinate(p0) at (-2.25,0);
\coordinate(p4) at (4.25,0);

\coordinate  (a) at (0,0);
\coordinate  (b) at (1,0);
\coordinate  (c) at (3.,0);

\draw[thick] (a) circle (0.25cm);
\node[draw=none,align=left] at (-0.4,-0.4) { $\upos{2}$} ;
\draw[black] (-0.25,0) circle (.05cm); 

\draw[thick] (b) circle (0.25cm);
\node[draw=none,align=left] at (0.6,-0.4) { $\upos{1}$} ;
\draw[black] (0.75,0) circle (.05cm);

\draw[thick] (c) circle (0.25cm);
\node[draw=none,align=left] at (2.6,-0.4) { $\upos{3}$} ;
\draw[black] (2.75,0) circle (.05cm); 

\draw[thick] (p0) circle (0.25cm);
\node[draw=none,align=left] at (-2.8,0.4) { $\Bo_{W}$} ;
\node[draw=none,align=left] at (-2.8,-0.4) { $\upos{0}$} ;

\draw[black] (-2.5,0) circle (.05cm);

\draw[thick] (p4) circle (0.25cm);
\node[draw=none,align=left] at (4.0,-0.4) { $\upos{n+1}$} ;

\node[draw=none,align=left] at (4.0,+0.4) { $\Bo_{B}$} ;

  \draw[thick] (-2.5,0) -- (+4,0);

\draw[thick] (4,0) circle (.05cm);


\end{tikzpicture}
\caption{Random configuration of robots on a boundary $\Bo$.}
\label{fig:Attach2}
\end{figure}

\end{comment}
\begin{figure}[t]  
\centering
\begin{tikzpicture}[line width = 1.1pt]
 \draw (0,0) -- (2.5,0);
 \draw (3.5,0) -- (5,0);
 
 \draw (-0.25,0) node[anchor=north] {${ \pos{0}=0}$};
  \draw [fill=black](0,0) circle (.05cm);
 
  \draw (5.5,0) node[anchor=north] {$\pos{n+1}=s$};    
   \draw [fill=black] (5,0) circle (.05cm);
   
     \draw (.5,0) node[anchor=south] {$s_1$};
  \draw (1.0,0) node[anchor=north] {$\pos{1}$};    
    \draw [fill=black](1.0,0) circle (.05cm);
   
     \draw (1.5,0) node[anchor=south] {$s_2$};
  \draw (2.0,0) node[anchor=north] {$\pos{2}$};    
    \draw [fill=black] (2.0,0) circle (.05cm);
   
    \draw  (2.5,0) circle (.05cm);
 \draw  (3.5,0) circle (.05cm);
   \draw [dashed] (2.5,0) -- (3.5,0);
   
   \draw (4.5,0) node[anchor=south] {$s_{n+1}$};
  \draw (4.0,0) node[anchor=north] {$\pos{n}$};    
 \draw [fill=black] (4.0,0) circle (.05cm);
   
\end{tikzpicture}
\caption{Configuration of $\nor$ robot positions $\posvec:= (\pos{1},\ldots,\pos{n})$ on $\Bo$, with artificial robot positions $\pos{0}=0$ and $\pos{n+1}=s$.} 
\label{fig:config}
\end{figure}

\subsection{Robot Capabilities}
We consider a group of $\nor$ disk-shaped robots, each with diameter  $\dia$, that are  distributed throughout a bounded environment at random locations. Each robot is equipped with Wi-fi with a communication radius of $d_c$, and it can sense and identify other objects (for instance, using a camera) within a sensing radius of $d_s$.  For computational convenience, we assume that $d_c = d_s$ and label this radius as $\sat$.  The robots have no knowledge of their global positions, nor are they provided with offboard sensing for localization. We  make the simplifying assumption that the robots form a \textit{synchronous network} (i.e. they are synchronized in time with respect to a global clock) and that their controllers do not fail in the course of execution.  

\subsection{Notation and Terminology for Robot Configurations}   
The environment contains a line segment boundary $\Bo$, whose endpoints can be distinguished from each other by the robots.  For instance, one endpoint may be white and the other black, which the robots could identify with their cameras.  We label these endpoints as $\Bo_W$ and $\Bo_B$.  Define the unordered position $\upos{i}$ of robot $i$, labeled $\Rob_{i}$, to be that of the robot's endpoint closest to $\Bo_W$, as shown in \autoref{fig:Attach2}. We define $\uposvec = [\upos{1}~...~\upos{\nor}]^T$ as the vector of all the robots' unordered positions.  By sorting the positions in $\uposvec$ according to increasing distance from $\Bo_W$, we obtain the ordered position vector $\posvec = [\pos{i}~...~\pos{\nor}]^T$. We define the distance between $\pos{i}$ and $\pos{i+1}$ as the $i$-th {\it slack}, $\si{i}$. We introduce two artificial robots $\Rob_{0}$ and $\Rob_{\nor+1}$ at unordered positions $\upos{0} = 0$ and $\upos{\nor+1} = s$, respectively, which create the slacks $\si{1} = \pos{1}$ and $\si{\nor+1} = s-\pos{\nor}$. 
We define the {\it slack vector} as $\svec = [\si{1}~...~\si{\nor+1}]^T$.  We say that slack $s_i$ is \textit{connected} if $s_i\leq d$, and \textit{disconnected} otherwise.  \autoref{fig:config} illustrates a configuration of ordered robot positions and slacks.

When a coverage scheme does not permit robots to physically overlap along the boundary, we refer to it as \textit{conflict-free (CF)}; otherwise, we call it \textit{conflict-tolerant (CT)}. Although CT schemes are not realistic for rigid-body robots that move in the plane,  they are easier to analyze than their CF equivalents, and quantitative results on properties of CT schemes can be readily extended to CF schemes.

We note that all our analysis can be adapted to boundaries that are closed curves, 
as discussed in our previous work \cite{KumarICRA2014}.

\subsection{System Design for Desired Robot Configuration Properties}

We now introduce terminology pertaining to the communication and sensing networks that are formed by a random robot configuration on $\Bo$.  The {\it communication graph} $\G$ of a robot configuration is a graph whose vertices are the robot positions $\pos{i:1,\ldots,n}$ (excluding the artificial robot positions) and whose edges are defined as $(\pos{i},\pos{j})$ iff $|\pos{j}-\pos{i}| \leq \con$.  The {\it vertex degree} ($\dgr$) of a robot is the number of neighboring robots along $\Bo$ that are  within its communication range.  This property can be used to estimate the number of robots that can detach from $\Bo$ without $\G$ losing connectivity, and hence measures the robustness of the network to node deletion and failure.  We define $\cmp$ to be the \textit{number of connected components} of $\G$.  This quantity can be used to estimate the number of additional robots that are required to make $\G$ {\it connected}, which occurs when $\cmp = 1$.  The \textit{sensed length} ($\slen$) of $\Bo$ is the total length of the subset of $\Bo$ that is sensed by at least one robot. We say that $\Bo$ is \textit{fully sensed} iff every point on it is sensed.  A robot configuration \textit{monitors} $\Bo$ iff it is connected \textit{and} senses the entirety of $\Bo$.  We define $\pcon$ as the probability that a random robot configuration on $\Bo$ is connected and $\pmon$ as the probability that a configuration monitors $\Bo$.

In this paper, we derive analytical expressions or approximations of the following five properties of a robot configuration generated by an SCS: $\pmon$, $\pcon$, and the expected values of $\slen$, $\cmp$, and $\dgr$.  As we will show, these properties are functions of $n$, $D$, $d$, and $s$.  We will present a procedure for computing one of these four parameters, given values for the other three, that will generate random robot configurations with a desired value of one of the five properties.

\section{Mathematical Preliminaries}
\label{Sec:Bground}

We now introduce a number of concepts that will be used throughout the paper. 

\subsection{Random Geometric Graphs (RGG)}
\label{Subsec:rgg}
Let $\rvuposvec_{1:n}$ be a vector of i.i.d. random variables (rv's), and let $\upos{i}$ be a realization of $\rvupos{i}$.  
Define a \textit{Random Geometric Graph} (RGG), denoted by $\G = \G(n,\con)$, with vertices $\{\uposvec_{1:n}\}$ and edges consisting of vertex pairs $(\upos{i},\upos{j})$ for which  $||\upos{i} - \upos{j}|| \leq \con$. The  properties of general RGGs, including their number of clusters, their probability of connectivity, and the size of their largest connected component, are difficult to compute precisely, but their asymptotics have been studied extensively in \cite{Matthew2003RGG}. We will derive exact formulas for the properties of $\G$ where possible, and  estimate  these properties otherwise.

We use the following definitions from \cite{friezeRandomGraphs}. Consider the property that $\G$ is connected. Since this property remains true if a random edge is added to $\G$, it is said to be \textit{monotone increasing}. Likewise, the property that $\G$ has at least $k$ components is {\it monotone decreasing}, since it remains true if a random edge is removed.  

Let $\G$ be an RGG with $n$ vertices and $m(n)$ edges, and let $\Prob(\G \in \mathscr{P})$ denote the probability that $\G$ has the monotone increasing property $\mathscr{P}$.  A function $m^*(n)$ that satisfies the following conditions is called a \textit{threshold function} for $\mathscr{P}$:
\begin{enumerate}
 \item If $m(n) = o(m^*(n))$, then $\G \notin \mathscr{P}$ \textit{almost surely} (a.s.)
 \item If $m(n) = \Omega(m^*(n))$, then $\G \in \mathscr{P}$ a.s.
\end{enumerate}
A threshold function $m^*(n)$ for a monotone decreasing $\mathscr{P}$ is defined exactly as above, but with $o$ and $\Omega$ switched. The threshold function $m^*(n)$ is called a \textit{sharp threshold} for a monotonically increasing property iff the following conditions hold for all $\epsilon >0$ \cite{friezeRandomGraphs}: 
\begin{enumerate}
 \item if $\frac{m(n)}{ m^*(n)} \leq 1-\epsilon$, then $\G \notin \mathscr{P}$ a.s., and  
 \item if $\frac{m(n)}{ m^*(n)} \geq 1+\epsilon$, then $\G \in \mathscr{P}$ a.s. 
\end{enumerate}

\vspace{2mm}

\begin{theorem}
\label{thm:sharpth}
 Every monotone property of an RGG has a sharp threshold \cite{goel2004sharp}. 
\end{theorem}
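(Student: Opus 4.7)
The plan is to parameterize the RGG $\G$ by its connectivity radius $\con$ (equivalently, by the expected edge count $m(n)$) and show that, for any monotone property $\mathscr{P}$, the probability $f(\con) := \Prob(\G(n,\con) \in \mathscr{P})$ transitions from near $0$ to near $1$ within a window of width $o(\con^*)$ around some critical value $\con^*$. Since $\mathscr{P}$ is monotone, $f$ is monotone in $\con$, so the critical value $\con^*$ with $f(\con^*) = 1/2$ is well defined, and the statement about $\frac{m(n)}{m^*(n)}$ from the sharp-threshold definition will follow by translating the radius window into an edge-count window using the asymptotic relation $m(n) \asymp n^2 \con$.

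First, I would bring the problem into the framework of threshold phenomena by applying a Margulis--Russo-type differential formula. The derivative $\mathrm{d}f/\mathrm{d}\con$ equals the expected total influence of the infinitesimal edges that appear as $\con$ is increased, where the influence of a potential edge $(\upos{i},\upos{j})$ is the probability that toggling its presence changes the truth of $\mathscr{P}$. A sharp threshold will follow once one shows that $\mathrm{d}f/\mathrm{d}\con$ is large at $\con^*$, because then the window over which $f$ rises from $\epsilon$ to $1-\epsilon$ is correspondingly narrow.

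Next, I would exploit the symmetry of the underlying i.i.d.\ vertex distribution to equalize the influences of all potential edges. By transitivity of the domain's symmetry group (for example, rotations of a circular boundary, or an averaging argument over reflections/translations on an interval), every potential edge carries the same influence, so the sum of influences is $\binom{n}{2}$ times a common value. I would then invoke a continuous analog of the Friedgut--Kalai influence inequality, which guarantees that a symmetric monotone Boolean function whose individual influences are small must have a sharp threshold, and combine it with the equalization above to bound the width of the transition window.

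The main obstacle is the translation of the Friedgut--Kalai apparatus, which is formulated for monotone symmetric properties of the discrete hypercube $\{0,1\}^N$, into the continuous RGG setting where the underlying randomness is a collection of vertex positions rather than independent edge indicators. The technical bridge is a discretization argument: approximate the vertex positions by a fine grid, encode the resulting RGG as an element of a high-dimensional hypercube on which the symmetry group of the domain acts transitively, apply the discrete sharp-threshold theorem, and pass to the limit as the grid is refined while controlling the discretization error on $f(\con)$. An alternative route is to invoke the Bourgain--Kahn--Kalai--Katznelson--Linial (BKKKL) theorem, which handles continuous product spaces directly and removes the need for discretization, at the cost of a more delicate verification that the symmetry hypothesis transfers to the RGG setting.
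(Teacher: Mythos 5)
The paper does not prove this statement at all: Theorem~\ref{thm:sharpth} is imported verbatim from the cited reference \cite{goel2004sharp}, so there is no in-paper argument to compare yours against. What you have written is essentially a reconstruction of the strategy of that reference (influence bounds, a Friedgut--Kalai/BKKKL-type theorem, and a discretization or continuous-cube argument), not a genuinely different route. As a proof sketch, however, it has two concrete gaps that would need repair before it could stand on its own.

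First, you set up the Margulis--Russo/influence apparatus over the wrong product space. In an RGG the edges are not independent indicators that can be ``toggled''; all $\binom{n}{2}$ adjacencies are deterministic functions of the $n$ i.i.d.\ vertex positions, so ``the influence of a potential edge $(\upos{i},\upos{j})$'' is not a Russo-type quantity, and $\mathrm{d}f/\mathrm{d}\con$ is not the sum of such edge influences. The correct product space is the vector of vertex positions (a point of $[0,s]^n$, or $[0,1]^{nk}$ in dimension $k$), the relevant influence is that of resampling a single \emph{vertex}, and the symmetry that equalizes the influences is the exchangeability of the i.i.d.\ vertices under permutation --- not transitivity of the spatial symmetry group of the domain, which in fact fails for an interval with distinguished endpoints. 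Second, even after BKKKL gives a lower bound on the maximum (hence, by exchangeability, on every) vertex influence, this does not immediately bound $\mathrm{d}f/\mathrm{d}\con$ from below: monotonicity here is in the radius $\con$, not coordinatewise in the vertex positions, so there is no off-the-shelf Russo identity linking total vertex influence to the derivative in $\con$. Bridging that mismatch (via a coupling/bootstrapping argument) is precisely the technical core of \cite{goel2004sharp}, and your sketch currently treats it as routine. If your intent is to match the paper, the honest answer is that the paper's ``proof'' is the citation itself.
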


\subsection{Geometric Objects}   
\label{Subsec:polytopes}

\subsubsection{Polytopes}
A geometric shape $\genpoly$ embedded in $\R^\nor $ is called a \textit{polytope} if it is bounded on all sides by hyperplanes; it is \textit{convex} if it can be expressed as an intersection of half-spaces \cite[Ch. 1]{ConvPoly}. A convex polytope that is specified by a collection of half-space inequalities is said to be in $H$ (hyperplane) form.  Alternatively, it may be defined in the $V$ (vertex) form as the convex hull of a set of vertices. 
We will commonly describe a polytope using its \textit{vertex matrix}, $\vm(\genpoly)$, each of whose columns gives the coordinates of one of the vertices of $\genpoly$.  Software routines for converting $\genpoly$ from one form to another to determine its convex hull and find its volume $\Vol(\genpoly)$ are available in computational geometric packages such as Sage \cite{sage} and cddlib \cite{cddlib}.

A polytope $\gensimp$ with $\nor +1$ vertices embedded in $\R^n$ is called a \textit{simplex}. The \textit{canonical simplex} is defined by 
\begin{align}
 \Delta_n = \{ \mathbf{x} \in \R^n: \1^T \mathbf{x} \leq 1 ~\text{and}~ \mathbf{x} \geq \0 \}
\end{align}
in its $H$ form, and its vertex matrix is the identity matrix $\eye_{n+1}$. Every simplex in $\R^n$ is a linear transformation of $\Delta_n$. The volume of a simplex may be determined in polynomial time from its $V$ form by a determinant \cite{DyerFrieze}.

\subsubsection{ Simplex-Hypercube Intersection}
\label{Subsec:genshi}

The following result on computing the volume of intersection between a  half-space and the unit hypercube is quoted from \cite{DyerFrieze, dyer1991computing}. Define a positive half-space by
 \begin{align}
 \label{eqn:hsp}
 \mathcal{T} := \{\svec \in \R^n: \mathbf{a}^T \mathbf{s} \leq b \}, ~\text{where} ~ \mathbf{a} >\0 ~\text{and} ~ b>0.
 \end{align}
Let $\favn_{gen}$ be the intersection of $\mathcal{T}$ with the unit hypercube $\hyp{cube} := [0,1]^n$. For every vertex $\vvec$ of $\hyp{cube}$, define the simplex $\Delta(\vvec)$ by 
\begin{align}
 \Delta(\vvec) = \{\svec \in \mathcal{T} : \svec \geq \vvec \} \cap \hyp{cube}.
\end{align}
The vertices of this simplex are $\vvec$ and the points $\mathbf{p}_i = \frac{1}{a_i}(b-\avec^T \vvec) \mathbf{e}_i$, where each $\mathbf{e}_{i:1,\ldots,n}$ is a unit vector along the $i$-th axis. Let $\vvec \in \{0,1\}^n$ denote an $n$-bit vector (i.e. a vector with $n$ entries that are zero or one) that iterates through the vertices of $\hyp{cube}$. 

\vspace{2mm}

\begin{lemma}
\label{thm:vhsp}
The volume of $\favn_{gen} = \mathcal{T} \cap \hyp{cube}$ is: 
\begin{eqnarray}
\label{eqn:vhsp}
\hspace{-4mm} \Vol(\favn_{gen}) \hspace{-2mm} &=& \hspace{-2mm} \sum _{\vvec \in \{0,1 \}^n } (-1)^{\1^T \vvec} \hspace{1mm}\Vol(\Delta(\vvec)) \nonumber \\ 
 &=& \hspace{-2mm} \frac{1}{n! \prod {a_i}}  \sum _{\vvec \in \{0,1\}^n} (-1)^ {\1^T \vvec} \left(\max( b-\avec ^T \mathbf{v},0)\right)^{n}. 
\end{eqnarray}
\end{lemma}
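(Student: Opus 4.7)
The argument rests on an inclusion-exclusion over the $2^n$ vertices of the unit hypercube, driven by the combinatorial identity
\[
\ind_{\svec \in [0,1)^n} \;=\; \sum_{\vvec \in \{0,1\}^n} (-1)^{\1^T \vvec} \,\ind_{\svec \geq \vvec} \qquad \text{for all } \svec \in \R_+^n,
\]
which differs from $\ind_{\svec \in [0,1]^n}$ only on a set of measure zero. To verify it, I would fix $\svec \geq \0$ and let $k$ count the coordinates with $s_i \geq 1$; a binary vector $\vvec$ satisfies $\svec \geq \vvec$ iff $\vvec$ is supported on exactly those $k$ coordinates, so the signed sum collapses to $(1-1)^k$, which vanishes unless $k=0$.

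First I would multiply both sides of this identity by $\ind_{\avec^T \svec \leq b}$ and integrate over $\R_+^n$. The left-hand side yields $\Vol(\favn_{gen})$ immediately, and each term on the right becomes $\int \ind_{\avec^T \svec \leq b,\, \svec \geq \vvec}\, d\svec = \Vol(\Delta(\vvec))$, where the constraint $\svec \geq \0$ is enforced automatically by $\vvec \geq \0$. This establishes the first equality.

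For the second equality, I would evaluate $\Vol(\Delta(\vvec))$ by the translation $\mathbf{t} = \svec - \vvec$, which carries $\Delta(\vvec)$ to $\{\mathbf{t} \geq \0 : \avec^T \mathbf{t} \leq b - \avec^T \vvec\}$. This image is empty when $b - \avec^T \vvec \leq 0$; otherwise it is a scaled canonical simplex with vertices $\0$ and $\tfrac{b - \avec^T \vvec}{a_i}\mathbf{e}_i$, whose volume equals $\tfrac{1}{n!}\prod_{i=1}^n \tfrac{b - \avec^T \vvec}{a_i}$ by the determinant formula cited in \autoref{Subsec:polytopes}. Both cases combine into the uniform expression $\Vol(\Delta(\vvec)) = \frac{(\max(b - \avec^T \vvec, 0))^n}{n!\, \prod_i a_i}$, giving the second equality.

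The only delicate step is the signed identity itself: I must check that the boundary set where some $s_i = 1$ carries zero Lebesgue measure and that, at any point with $k \geq 1$ coordinates exceeding one, the alternating sum over the $2^k$ admissible $\vvec$'s really collapses to $(1-1)^k = 0$. Everything else is a bookkeeping exercise in translating the simplex and applying the standard simplex-volume determinant.
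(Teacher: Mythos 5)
Your proof is correct and complete. For context: the paper does not actually prove this lemma --- it quotes the result from Dyer--Frieze and offers only the informal remark that the alternating signs arise from an ``implicit application'' of inclusion--exclusion. Your argument supplies exactly the missing substance, and it does so along the lines the paper gestures at: the signed identity $\ind_{\svec \in [0,1)^n} = \sum_{\vvec} (-1)^{\1^T\vvec}\ind_{\svec\geq\vvec}$ is the precise form of that PIE, and multiplying by $\ind_{\avec^T\svec\leq b}$ and integrating, followed by the translation $\mathbf{t}=\svec-\vvec$ and the standard simplex-volume determinant, yields both equalities cleanly. Two small remarks. First, in verifying the identity you write that $\svec\geq\vvec$ iff $\vvec$ is supported on ``exactly those $k$ coordinates''; it should be ``on a subset of those $k$ coordinates,'' which is what you implicitly use when the sum collapses to $\sum_{S\subseteq K}(-1)^{|S|}=(1-1)^k$. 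Second, your reading of $\Delta(\vvec)$ as $\{\svec\geq\vvec:\avec^T\svec\leq b\}$ \emph{without} the further intersection with $\hyp{cube}$ is the one under which the closed-form volume $(\max(b-\avec^T\vvec,0))^n/(n!\prod a_i)$ is literally correct; the paper's own definition of $\Delta(\vvec)$ includes the cube intersection, which would in general destroy the simplex structure and is inconsistent with the vertex list and volume formula it states. Your version quietly repairs that inconsistency.
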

When $\vvec$ lies in $\mathcal{T}$, we have $(b-\avec^T\vvec) \geq 0$, and the resulting simplex $\Delta(\vvec)$ contributes a nonzero volume to $\favn_{gen}$. Otherwise, the term $b-\avec^T \vvec$ is negative, and the resulting $\Delta(\vvec)$ contributes nothing to $\Vol(\favn_{gen})$. The sum in \autoref{eqn:vhsp} has alternating positive and negative terms that arise from the implicit application of the Principle of Inclusion and Exclusion (PIE) \cite{vanLintCombin2001}. We will later express volumes of polytope intersections as sums of terms with alternating opposite signs, similar to \autoref{eqn:vhsp}. Note that the subset of $\mathcal{T}$ lying in the positive orthant of $\R^n$, 
\begin{align}
\label{eqn:tsimp}
\mathcal{T}_{simp} = \{ \svec \in \R_+^n : \svec \geq 0 ~\text{and}~  \mathbf{a}^T \mathbf{s} \leq b \}, 
\end{align}
defines a simplex with vertices at $\0$ and at the $n$ points $\frac{b}{a_i} \mathbf{e}_i$, where  $\mathcal{T}$ intercepts each  coordinate axis. Hence, we can express $\favn_{gen}$ as the simplex-hypercube intersection $\mathcal{T}_{simp} \cap \hyp{cube}$, a fact which will be exploited in later sections. Finally, we note that \autoref{eqn:vhsp} takes $\Omega(2^n)$ time to evaluate.  

\subsection{Probability Theory and Statistics}  
\label{Subsec:prob}
We write $X \sim f_X(x): x\in \R$ to indicate that $X$ is a real-valued rv with probability density function (pdf) $f_X(x)$. Similarly, we write $\mathbf{X} \sim f_\mathbf{X}(\mathbf{x}):\mathbf{x}\in \R^n$ to  indicate that $\mathbf{X}$ is a real-valued random vector with pdf $f_\mathbf{X}(\mathbf{x})$.  We will use $ F_X(x)$ to denote the cumulative distribution function (CDF) associated with $f_X(x)$. 

Let $X$ be a real-valued rv defined as above. We use $\ind_{A}$ to denote the \textit{indicator function} that is defined to be unity over the region $A$ in its subscript, and zero elsewhere. An \textit{indicator rv} such as $\ind_{X\geq 1}$ is one which is unity if the event in its subscript occurs, and zero otherwise. Let $f_\mathbf{X}(\mathbf{x})$ be the joint pdf of the variables ${X}_{1:n}$, whose support is a region $\Omega \in \R^n$. If $\Omega'$ is a subset of $\Omega$, then the measure of $\Omega'$ induced by $f$ is $\int_{\Omega'} f_{\mathbf{X}}(\mathbf{x}) d\mathbf{x}$. Since this integral gives the volume under the pdf $f_{\mathbf{X}}$ over $\Omega'$, we will denote it by $\Vol(f_{\mathbf{X}} , \Omega')$.

\subsubsection{Poissonization}
\label{Subsubsec:poisson}
Many of our later computations will involve sums such as $X_1+\ldots+X_n$, where the $X_i$ are \textit{dependent} rv's. Because of this dependence, the strong laws of large numbers cannot be directly applied to this sum.  Instead, this sum can be approximated by a Poisson rv through a process called {\it Poissonization}, which we will use in \autoref{Sec:Estim}.

We say that the rv's $X_{i:1,\ldots,n}$, are \textit{negatively associated} (n.a.) if an increase (resp., decrease) in one of the rv's causes the others to decrease (resp., increase); the precise definition is given in \cite[p. 26]{barbour1992poisson}. Suppose that we have n.a. indicator rv's  $\ind_{X_{1:n}}$ and let $W:=\sum_{i=1}^n \ind_{X_i}$. We may then approximate  $f_W(w)$ with a Poisson rv with the pdf $\Poi(\lambda)$, where $\lambda := \Exp(W) =  \sum_{i=1}^n \Exp(\ind_{X_i})$, which is justified as follows. First,  we define the \textit{total variation (TV) distance} $d_{TV}(f,g)$ between two probability measures $f$ and $g$ on a sample space $\mathscr{\omega}$ equipped with a sigma algebra $\sigma$.  The TV distance is the maximum possible difference between the probabilities assigned by $f$ and $g$ to the same event:
\begin{align}
 d_{TV} (f,g) = \sup _{E \in \sigma} || f(E) - g(E) ||.
\end{align}
Then we can apply the following result, which implies that the TV distance between the pdf of $W$ and  $\Poi(\lambda)$ diminishes exponentially as $\lambda \rightarrow \infty$.  
\vspace{2mm}

\begin{lemma}
\label{thm:poisson}
 From \cite{barbour1992poisson},
 \begin{align}
  d_{TV} ( f_W(w), \mathrm{Poi}(\lambda) ) ~\leq~ (1-\exp(-\lambda)) \left(1- \frac{\Var (W)}{\lambda}\right),
  \end{align}
\end{lemma}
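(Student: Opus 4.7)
The plan is to invoke the Chen--Stein method for Poisson approximation and exploit the negative association of the indicators $\ind_{X_i}$ to set up a monotone coupling that makes the final bound tight. First I would use the Stein characterization of the Poisson law: a $\mathbb{Z}_+$-valued random variable $Z$ satisfies $Z \sim \Poi(\lambda)$ iff $\Exp[\lambda g(Z+1) - Z g(Z)] = 0$ for every bounded $g:\mathbb{Z}_+ \to \mathbb{R}$. For any event $B \subseteq \mathbb{Z}_+$, the associated Stein equation $\lambda g(k+1) - k g(k) = \ind_{k \in B} - \Poi(\lambda)(B)$ has a unique bounded solution $g_B$, and the classical estimate from \cite{barbour1992poisson} gives $\|\Delta g_B\|_\infty \leq (1-\exp(-\lambda))/\lambda$, where $\Delta g(k) := g(k+1)-g(k)$. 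Writing the TV distance as $d_{TV}(f_W,\Poi(\lambda)) = \sup_{B}|\Exp[\lambda g_B(W+1) - W g_B(W)]|$ reduces the task to controlling this Stein discrepancy uniformly in $B$.

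Second, I would decompose the discrepancy using the indicator representation of $W$. Setting $W_i := W - \ind_{X_i}$ and using $\ind_{X_i}\, g(W) = \ind_{X_i}\, g(W_i+1)$, one obtains
\begin{align}
\Exp[Wg(W)] = \sum_{i=1}^{n} p_i \, \Exp[g(W_i+1) \mid \ind_{X_i}=1],
\end{align}
and, combining with $\Exp[\lambda g(W+1)] = \sum_i p_i \Exp[g(W+1)]$,
\begin{align}
\Exp[\lambda g(W{+}1) - Wg(W)] = \sum_{i=1}^{n} p_i \bigl(\Exp[g(W{+}1)] - \Exp[g(W_i{+}1) \mid \ind_{X_i}=1]\bigr).
\end{align}

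Third, I would bring in the negative-association hypothesis via a monotone coupling. Because the $\ind_{X_i}$ are n.a., for each $i$ the conditional law $\mathcal{L}(W_i \mid \ind_{X_i}=1)$ is stochastically dominated by $\mathcal{L}(W_i)$; Strassen's theorem then yields a coupling $(W, W_i^{\star})$ on a common probability space with $W_i^{\star} \stackrel{d}{=} (W_i \mid \ind_{X_i}=1)$ and $W_i^{\star} \leq W_i \leq W$ almost surely. Using the Lipschitz-type estimate $|g(W+1) - g(W_i^{\star}+1)| \leq \|\Delta g\|_\infty\,(W - W_i^{\star})$ and summing over $i$, the Stein discrepancy is bounded by $\|\Delta g\|_\infty \sum_{i} p_i \Exp[W - W_i^{\star}]$. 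A short algebraic identity, based on $p_i\,\Exp[W_i^{\star}] = \Exp[\ind_{X_i} W_i] = \Exp[\ind_{X_i} W] - p_i$ and summing, collapses the right-hand sum to $\lambda^2 - \Exp[W^2] + \lambda = \lambda - \Var(W)$. Combining with $\|\Delta g_B\|_\infty \leq (1 - \exp(-\lambda))/\lambda$ produces the claimed bound.

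The main obstacle is the coupling step: one must verify that negative association, which is stated as a reversal of covariance inequalities for monotone functions on disjoint coordinate subsets, actually implies the one-dimensional stochastic order $\mathcal{L}(W_i \mid \ind_{X_i}=1) \preceq_{st} \mathcal{L}(W_i)$ that powers Strassen's construction. Once that monotonicity is in hand, the rest is a routine moment identity combined with the standard Stein-solution estimate, so the entire technical weight of the lemma sits in translating n.a.\ into a usable stochastic-dominance coupling.
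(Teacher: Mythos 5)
The paper does not prove this lemma at all --- it is quoted verbatim from the cited reference \cite{barbour1992poisson}, so there is no in-paper argument to compare against. Your outline is a correct reconstruction of the standard Stein--Chen proof given in that reference: the Stein-equation bound $\|\Delta g_B\|_\infty \leq (1-e^{-\lambda})/\lambda$, the size-bias decomposition of $\Exp[Wg(W)]$, the monotone coupling obtained from negative association (which does yield $\mathcal{L}(W_i \mid \ind_{X_i}=1) \preceq_{st} \mathcal{L}(W_i)$, by testing the covariance inequality against indicators of upper sets of $\sum_{j\neq i}\ind_{X_j}$), and the moment identity collapsing $\sum_i p_i \Exp[W - W_i^{\star}]$ to $\lambda - \Var(W)$ are all exactly the ingredients of the source's proof. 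The only point worth making explicit is that the Strassen coupling of $W_i^{\star}$ below $W_i$ must be glued onto the space carrying $W$, which is routine since only the marginals of $W$ and of $(W_i \mid \ind_{X_i}=1)$ enter the discrepancy.
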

where $\Var (W)$ is the variance of $W$.

\subsubsection{Order Statistics}
\label{Subsec:ostat}
Suppose the rv's   $\rvupos{i}$ in the vector $\rvuposvec_{1:n}$ in \autoref{Subsec:rgg} are drawn from the same pdf $\fpar(x):x\in \R$. This pdf is called the \textit{parent pdf} (or just \textit{parent}, when the context is clear) of the rv's.   The ordered rv  $\rvpos{i}$  is called the $i$-th \textit{order statistic} of the parent variable $\rvupos{i}$ and is the $i$-th smallest of the $\nor$ components of $\rvuposvec$. The joint pdf of the $\nor$ order statistics and the marginal pdf of the $i$-th order statistic $\rvpos{i}$ are given by \cite{ostat2003}
\begin{eqnarray}
\label{eqn:ostat}
\jpos  &=& \nor ! \prod_{i=1}^{\nor} {\fpar(t)} \ind_{\pos{1}\leq \pos{2} \ldots \pos{n-1} \leq \pos{n}},  \nonumber \\
\mpos{i}  &=& \sum_{j=i}^{\nor} \binom{\nor}{j} {\Fpar(t)^{j}} (1-\Fpar(t))^{n-j},
\end{eqnarray}
where $G$ is the parent CDF.

\section{Definition of Coverage Properties}
\label{Sec:Covprops}

In this section, we give general definitions of the coverage properties of interest in terms of indicator rv's.  These formulas will be applied in Sections \ref{Sec:Unif}, \ref{Sec:UnifCF}, \ref{Sec:Estim}, and \ref{Sec:NonUnif} to derive the properties for different coverage scenarios.

\subsection{Definitions of events $\mathrm{con}$, $\sen$, and $\mon$}
\label{Subsec:covprops-con}
Let $\mathrm{con}$ be the event that a robot configuration on $\Bo$ is \textbf{connected}.  This event can be expressed as 
\begin{equation}
\label{eq:condef}
\mathrm{con} := \prod_{i=2}^{n} \ind_{S_i\leq \con}.
\end{equation}
The event that $\Bo$ is \textbf{fully sensed} is:
\begin{align}
\label{eq:sendef}
\sen :=  \ind_{S_1 \leq \con}  \ind_{S_{n+1} \leq \con} \cdot  \prod_{i=2}^{n}  \ind_{S_i \leq 2\con}.
\end{align}
Likewise, the event that $\Bo$ is \textbf{monitored} is:
\begin{equation}
\label{eq:mondef}
\mathrm{\mon} := \prod_{i=1}^{n+1} \ind_{S_i \leq \con}.
\end{equation}

\subsection{Numbers of connected, fully sensed, and monitored slacks}
\label{Subsec:covprops-ncon}
The events $\mathrm{con}$, $\sen$, and $\mon$ are all products of indicator functions, which cannot be Poissonized using Lemma \autoref{thm:poisson}. On the other hand, the following functions, defined as sums of indicators, can be Poissonized:
\begin{eqnarray}
 \Ncon &:=& \sum_{i=2}^{n} \ind_{S_i\leq \con},  \label{eq:Ncon} \\
 \Nsen &:=& \sum_{i=1,n+1} \ind_{S_i \leq \con} + \sum_{i=2}^{n} \ind_{S_i\leq 2 \con},  \label{eq:Nsen}\\  
  \Nmon &:=& \sum_{i=1}^{n+1} \ind_{S_i \leq \con}. \label{eq:Nmon}
\end{eqnarray}
Here, $\Ncon$, $\Nsen$, and $\Nmon$ are the numbers of slacks that are connected, fully sensed, and monitored, respectively, by a robot configuration. 

\subsection{Sensed Length, $\slen$}
\label{Subsec:Covprops-slen}
 We compute the \textbf{sensed length} $\slen$ of a robot configuration by summing the lengths of $\Bo$ that are sensed by the robots on each slack.   The first robot $\pos{1}$ senses  length $\min(s_1,d)$ on the slack $s_1$. Every slack $s_{i:2,\ldots,n}$ will be sensed by the two robots at $\pos{i}$ and $\pos{i+1}$, which will together sense a length of $\min(s_i,2d)$ on it. Likewise, the last robot $\pos{n}$ senses length $\min(s_{n+1},d)$ on $\si{n+1}$. The total sensed length is thus
 \begin{align}
  \slen := \sum_{i=1,n+1} \min(s_i,d) + \sum_{i=2}^{n} \min(s_i,2d).
 \end{align}
Defining $\psen$ as the probability of the event $\sen$ in \autoref{eq:sendef}, we may express the expectation of $\slen$ as
 \begin{align}
 \label{eqn:expslen}
\Exp(\slen) = \psen \cdot s.
 \end{align}

\subsection{Number of Connected Components, $\cmp$}
\label{Subsec:Covprops-cmp}
Using the fact that each disconnected slack increments the number of \textbf{connected components ($\cmp$)} in $\G$, we may write 
\begin{align}
\label{eqn:cmpnum}
 \cmp = 1+ \sum_{i=2}^{n} \ind_{S_i > \con},
\end{align}
Applying the linearity of expectation to \autoref{eqn:cmpnum}, we find the expected value of $\cmp$ to be 
\begin{equation}
\label{eq:expcmp}
\Exp(\cmp) = 1+ \sum_{i=2}^{n} \Exp(\ind_{S_i > \con}) = 1+ \sum_{i=2}^{n} \Prob(S_i > \con).
\end{equation}

\subsection{Vertex Degree, $\dgr$}
\label{Subsec:Covprops-dgr}
The \textbf{vertex degree ($\dgr$)} of a robot at position $\pos{i}$ is 
\begin{align}
\label{eqn:degvert}
 \deg(\pos{i}) = \sum_{1\leq j\leq n, j\neq i} \ind_{|\pos{i} - \pos{j}| \leq d}.
\end{align}
Applying the linearity of expectation to \autoref{eqn:degvert}, the expected value of $\dgr$ is 
\begin{equation}
\label{eq:expdgr1}
\Exp(\dgr) = (n-1) \Prob(|X_i - X_j| \leq \con),
\end{equation}
where $X_i$ and $X_j$ are any two unordered positions on $\Bo$. 

\end{comment}

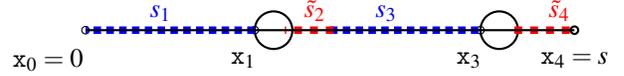
\begin{figure}
   \begin{tikzpicture}

\coordinate(p0) at (-2.25,0);
\coordinate(p4) at (4.25,0);

\coordinate  (a) at (0,0);
\coordinate  (b) at (1,0);
\coordinate  (c) at (3.,0);

\draw[thick] (a) circle (0.25cm);
\node[draw=none,align=left] at (-0.4,-0.4) { $\pos{1}$} ;
\draw[black] (-0.25,0) circle (.05cm);

\draw[thick] (c) circle (0.25cm);
\node[draw=none,align=left] at (2.6,-0.4) { $\pos{3}$} ;
\draw[black] (2.75,0) circle (.05cm); 

\node[draw=none,align=left] at (-3.0,-0.4) { $\pos{0}=0$} ;
\draw[black] (-2.5,0) circle (.05cm); 

\node[draw=none,align=left] at (4.0,-0.4) { $\pos{4}=s$} ;

     \draw[color=blue] (-1.5,0) node[anchor=south] {$\si{1}$};
     \draw[color=blue][dotted][line width=.1cm] (-.25,0) -- (-2.5,0);

     \draw[color=blue] (1.5,0) node[anchor=south] {$\si{3}$};
     \draw[color=blue][dotted][line width=.1cm] (0.75,0) -- (2.75,0);

      \draw[color=red] (0.55,0) node[anchor=south] {$\fsi{2}$};
     \draw[color=red][dashed][line width=.1cm] (0.8,0) -- (0.15,0);

         \draw[color=red] (3.8,0) node[anchor=south] {$\fsi{4}$};
     \draw[color=red][dashed][line width=.1cm] (3.2,0) -- (4,0);

\draw[thick] (-2.5,0) -- (+4,0);

\draw[thick] (4,0) circle (.05cm);


\end{tikzpicture}
\caption{Slacks (blue) and free slacks (red) of the robot configuration in \autoref{fig:Attach2}.  }
\label{fig:CovProps}
\end{figure}

\section{Geometric Formulation}
\label{Sec:Geom}

We now present a geometric formulation of robot positions and slacks as points in high-dimensional spaces. This geometric approach will be used in \autoref{Sec:Unif} through \autoref{Sec:NonUnif} to compute the properties given in \autoref{Sec:Covprops} for different problem scenarios. 

Consider the vector of robot positions $\posvec$ as a point in $\R^n$, and neglect CF requirements.  Every valid position vector on $\Bo$ satisfies the constraints 
\begin{align}
 \label{eqn:pconstraint}
 0\leq \pos{1} \leq \ldots \leq \pos{n} \leq s. 
\end{align}
\autoref{eqn:pconstraint} gives the $H$ form of a simplex in $\R^n$, which we refer to as the {\it position simplex} $\psimpn$. We could alternatively consider the slack vector defined by $\posvec$,
\begin{align}
\label{eqn:svecdef}
 \svec_{1:n+1} : = \posvec_{1:n+1} - \posvec_{0:n}.
\end{align}
Representing $\svec$ as a point in $\R^{n+1}$, we observe that a valid slack vector satisfies the constraints
\begin{equation}
\label{eqn:sconstraint}
\0 \leq \svec ~~~\text{and}~~~ \sum_{i=1}^{n+1} \si{i} =  \1^T \svec = s.
\end{equation}
These inequalities define the $H$ form of a \textit{degenerate} $n$-dimensional simplex $\ssimpn$ embedded in $\R^{n+1}$, since $\svec$ can be completely specified by $\nor$ slacks instead of $\nor+1$. By dropping $\si{n+1}$, we may redefine $\ssimpn$ as 
\begin{align}
 \label{eqn:sconstraint2}
 \ssimpnf :=\{ \svec \in \R^n : \0 \leq \svec ~ \text{and}~ \1^T \svec \leq s \},
\end{align}
which is a full-dimensional irregular simplex in $\R^n$. \autoref{eqn:sconstraint} defines the regular simplex $s \cdot \Delta_n$ in $\R^{n+1}$, which scales the canonical simplex $\Delta_n$ by a factor of $s$. This regularity often simplifies connectivity-related computations. The full-dimensional form $\ssimpnf$  represents the last slack $\si{n+1}$ implicitly, and so constraints on $\si{n+1}$ must be treated separately. It is easier to use software libraries to compute volumes and integrals
over $\ssimpnf$ than over the degenerate form $\ssimpn$. We shall prefer one representation over the other depending upon convenience. \autoref{fig:P2} illustrates $\psimpn$ and $\ssimpn$ for  $n=2$ robots.

\end{comment}
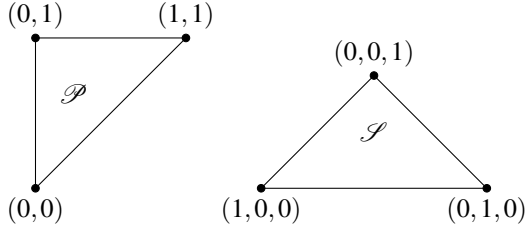
\begin{figure}[t] 
\centering
\begin{tikzpicture}
 \draw (0,0) -- (0,2);
 \draw (0,0) -- (2,2);
 \draw (0,2) -- (2,2); 

 \draw [fill=black] (0,-0.) circle (0.05) node[below, black]{$(0,0)$};
  \draw [fill=black] (0,2.) circle (0.05) node[above, black]{$(0,1)$};
 \draw [fill=black] (2,2.) circle (0.05) node[above, black]{$(1,1)$};

  \draw (.5,1.5) node[anchor=north] {$\mathcal{P}$};

   \draw (3,0) -- (6,0);
 \draw (3,0) -- (4.5,1.5);
 \draw (6,0) -- (4.5,1.5); 

 \draw [fill=black] (3,-0.) circle (0.05) node[below, black]{${\small (1,0,0)}$};
  \draw [fill=black] (6,-0.) circle (0.05) node[below, black]{$(0,1,0)$};
 \draw [fill=black] (4.5,1.5) circle (0.05) node[above, black]{$(0,0,1)$};
 
  \draw (4.5,1) node[anchor=north] {$\mathcal{S}$};

 \end{tikzpicture}
  \caption{Position simplex and slack simplex for $n=2$ robots and $\sla=1$.}
  \label{fig:P2}
\end{figure}

We now describe how CF coverage, connectivity, and the condition of monitoring can be expressed in terms of constraints over $\psimpn$ or $\ssimpn$.

\paragraph{CF Constraints}
CF constraints prohibit robots of diameter $\dia$ from extending beyond $\Bo$ and from conflicting (i.e., overlapping).  These constraints are defined as 
\begin{eqnarray}
\label{eqn:cfconstraints}
 \pos{i} - \pos{i-1} &\geq& \dia, ~~~ i=1,\ldots,n+1.
\end{eqnarray}

Note that \autoref{eqn:cfconstraints} enforces $\pos{1} \geq \dia$ and $\pos{n} \leq s-\dia$ to ensure against conflicts with the virtual robots at $\pos{0} =0$ and $\pos{n+1}=s$. The conflict-free subset of $\psimpn$ (and $\ssimpn$) satisfies \autoref{eqn:pconstraint} and \autoref{eqn:cfconstraints}.

\paragraph{Connectivity and Monitoring Constraints}
For a robot configuration to be connected, the slack vector must satisfy the constraint \begin{align}
\label{eqn:connconstraints}
\svec_{2:n} \leq d\1^T. 
\end{align}
For a configuration to monitor $\G$, we require in addition that the slacks $\si{1}$ and $\si{n+1}$ be connected, leading to 
\begin{align}
\label{eqn:monconstraints}
 \svec_{1:n+1} \leq d\1^T.
 \end{align}

Monitoring requires $\svec$ to lie within the $(n+1)$-dimensional hypercube $\hyp{\rm mon} = [0,d]^{n+1}$. On the other hand, connectivity places no constraints on the extremal slacks, which may lie in $[0,s]$; the resulting $\svec$ lies within the hypercuboid $\hyp{\rm con} = [0,s] \times [0,d]^{n-1} \times [0,s]$.  Note that a minimum of $\nmin := \lfloor s/d \rfloor $ robots is required to monitor $\Bo$.

\section{CT Coverage with Uniform I.I.D. Parents} 
\label{Sec:Unif}

In this section, we analyze conflict-tolerant (CT) coverage schemes with a uniform i.i.d. parent pdf.    Since the uniform measure of a polytope is proportional to its volume, the probability of an event $\favn \subseteq \ssimpn$  can be computed as $\Vol(\favn) / \Vol(\ssimpn)$, a fact that does not hold for general i.i.d. parents.    Consequently, this case will generally involve the simplest computations.

\subsection{Spatial pdfs of Robot Positions and Slacks}
\label{Subsec:spatial}

The uniform parent is defined by $\fpar := \frac{1}{s} \ind_{\Bo}$.  From \autoref{eqn:ostat}, the joint pdf of robot positions is uniform over $\psimpn$, and $\rvpos{i}$ has a Beta pdf:
\begin{eqnarray} 
 \jpos &=& \frac{n!}{s^n} \ind_\psimpn \label{eqn:unijpos} \\
 f_{\rvpos{i}} (\pos{i}) &=& s \cdot \mathrm{Beta}(i,n-i+1). \label{eqn:unimpos}
\end{eqnarray}

Making the change of variables  $\rvsla{i} = \rvpos{i} - \rvpos{i-1}$ in \autoref{eqn:unijpos}, we find that $\jsla$ is uniform over the slack simplex $\ssimpn$:
\begin{align}
\label{eqn:ostat-unisla}
 \jsla \sim \frac{n!}{s^{n+1}} \ind_{\ssimpn}.
\end{align}
Since the slacks are not subject to ordering constraints, they are {\it exchangeable} rv's and thus are identically distributed. In particular, $\rvsla{i} \sim \rvsla{1}$, and since $\rvsla{1} = \rvpos{1}$, we have by \autoref{eqn:unimpos} that
\begin{align}
\label{eqn:msla}
 f_{S_i} (s_i) = s \cdot {\mathrm{Beta}(1,n)}.
\end{align}

The mean positions at $\Exp(\rvpos{i})=  \frac{s \cdot i}{\nor+1}$ subdivide $\Bo$ into $\nor+1$ equal slacks, each of length $\Exp(\rvsla{i}) = \frac{s}{\nor+1}$, as we would expect of robot configurations on average for a uniform parent pdf.

As $\nor$ and $\con$ increase, we expect $\pmon$, $\pcon$, $\slen$, and $\dgr$ to increase monotonically regardless of the parent pdf $\fpar$. The property $\cmp$ decreases monotonically with $\con$, but as we will see in \autoref{Sec:Design}, it does not vary monotonically with $\nor$.   We will now derive each of these graph properties for the uniform parent.

\subsection{Probability of Monitoring, $\pmon$} 
\label{Subsec:pmon}

From \autoref{eqn:monconstraints}, the subset of $\ssimpn$ whose configurations monitor $\Bo$ is $\favn_{\mon} := \ssimpn \cap \hyp{\mon}$, which we refer to as the \textit{favorable region} for monitoring. Since $\favn_{\mon}$ is the intersection of two convex polytopes, it is a convex polytope as well. We then have $\pmon = \Vol(\favn_{\mon})/\Vol(\ssimpn)$.  While we may determine $\Vol(\favn_{\mon})$ by first triangulating it into simplices, in practice this procedure is computationally intensive. Instead, we compute this volume using an approach that we developed in \cite{KumarICRA2014}, which we summarize here.

 Let $\unfavn_{\mon} : =  \ssimpn \setminus \favn_{\mon}$ be the exterior of $\favn_{\mon}$. This region consists of all slack vectors with at least one disconnected slack. We will express $\unfavn_{\mon}$ as the union of \textit{intersecting} simplices. Consider that subset of $\unfavn_{\mon}$ which has slack vectors with a single disconnected slack $s_k$. Define $s'_k = s_k-d$, and note that all such slack vectors satisfy 
\begin{align}
\label{eqn:onediscslack}
s'_k ~+ \sum_{1\leq j \leq n+1,j\neq k} s_j  ~=~ s-d, 
\end{align}
forming a regular simplex of side $\sqrt{2}(s-d)$. We will call this simplex the \textit{exterior simplex} of $s_k$ and denote it by $\unfavn_{\mon}(s_k>d)$. It is clear that 
\begin{align}
 \unfavn_{\mon} = \bigcup _{k=1}^{n+1} \unfavn_{\mon}(s_k >d).
\end{align}
However, because the simplices $\unfavn_{\mon}(s_k>d)$ overlap, 
\begin{align}
  \sum _{k=1}^{n+1} \Vol( \unfavn_{\mon}(s_k >d)) > \Vol(\unfavn_{\mon}).  \nonumber 
\end{align}
Consequently, finding $\Vol(\unfavn_{\mon})$ requires applying the PIE. To do so, let $\vvec \in \{0,1\}^{n+1}$, and define its associated exterior simplex $\unfavn_{\mon}(\vvec)$ by 
\begin{align}
\label{eqn:uvvec}
 \unfavn_{\mon}(\vvec) := \{ \svec \in \unfavn_{\mon}: \svec \geq d \vvec \}.
\end{align}
Every 1-bit component $v_i=1$ in $\vvec$ causes the corresponding slack $s_i$   to be disconnected in the exterior simplex; other slacks, associated with the 0-bits, are \textit{unrestricted}.  
Analogously to \autoref{eqn:onediscslack},  $\unfavn_{\mon}(\vvec)$ is a regular simplex of side $\sqrt{2} (s-kd)$, and its volume is 
\begin{equation} 
\Vol(\unfavn_{\mon}(\vvec)) = \frac{(s-kd)^n\sqrt{n+1}}{n!}.
\end{equation} 

In \autoref{eqn:uvvec}, we need only consider those vectors $\vvec$ with at most $\nmin$ bits set to 1,  since a larger number of disconnected slacks would cause $\svec$ to fall outside $\ssimpn$. Applying the PIE, we compute
\begin{eqnarray}
\label{eqn:unfavnmon}
 \Vol(\unfavn_{\mon}) &=& \sum _{\vvec \in \{0,1\}^{n+1}:~ 1\leq \1^T \vvec \leq \nmin } \Vol(\unfavn_{\mon}(\vvec)) \nonumber \\
  &=& \sum _{k=1}^{\nmin} (-1)^{k-1} \binom{n+1}{k} (s-kd) ^{n} \frac{\sqrt{n+1}}{n!}. \nonumber \\
\end{eqnarray}
Finally, we obtain an expression for $\pmon$: 
\begin{eqnarray}
\label{eqn:pmon}
 \pmon &=& \frac{\Vol(\favn_{\mon})}{\Vol(\ssimpn)} ~=~ 1- \frac{\Vol(\unfavn_{\mon})} {\Vol(\ssimpn)} \nonumber \\
 &=&   1 - \sum _{k=1}^{\nmin} (-1)^{k-1} \binom{n+1}{k} \left(1-k\frac{d}{s}\right) ^{n}.
\end{eqnarray}

Alternatively, we can use Lemma \autoref{thm:vhsp} to compute $\Vol(\favn_{\mon})$ as follows.   We define $\favn_{\mon}$ in its full-dimensional form as the intersection between the cube $\hyp{cube} = [0,d]^n$ and \textit{two} regions:
\begin{align}
\label{eqn:2inter}
 \favn_{\rm full, \mon} = \ssimpnf \cap   {\ssimpnf}(s_{n+1}\leq d) \cap \hyp{cube}
\end{align}
Here, the region 
\begin{align}
\label{eqn:np1conn}
 {\ssimpnf}(s_{n+1}\leq d) := \{\svec_{1:n} \in \ssimpnf : 0\leq (s-\1^T \svec_{1:n}) \leq d \}
\end{align}
captures the connectivity of $\si{n+1}$. 
To compute $\Vol(\favn_{\rm full, \mon})$, we first define the region $\ssimpnf(s_{n+1} > d)$ to be the subset of $\ssimpnf$ over which $\si{n+1}$ is disconnected. We note that 
\begin{align}
 \favn_{\rm full, \mon} =  (\ssimpnf \setminus   {\ssimpnf}(s_{n+1} > d)) \cap \hyp{cube};
\end{align}
exploiting the fact that $\ssimpnf(s_{n+1} > d)$ is a simplex, we obtain
\begin{align}
\label{eqn:2intervol}
\Vol(\favn_{\rm full, \mon}) &= \Vol(\ssimpnf \cap \hyp{cube}) \nonumber \\
  & ~~~~ - \Vol({\ssimpnf}(s_{n+1} > d) \cap \hyp{cube})
\end{align}
by applying Lemma \autoref{thm:vhsp} once for each volume of intersection.

\subsection{Probability of Connectivity, $\pcon$} 
\label{Subsec:pcon}
From \autoref{eqn:connconstraints}, connectivity of the graph $\G$ does not place any constraints on  $\si{1}$ and $\si{n+1}$. Instead, we define the connected region $\favn_{\text{con}}$  to be the subset of $\ssimpn$ consisting of connected slack vectors, analogous to $\favn_{\mon}$. We may write $\favn_{\text{con}}$ in full-dimensional form as $\favn_{\text{con}} := \ssimpn_{\rm full} \cap \hyp{\text{con}}$, where $\hyp{\text{con}}$ is defined in \autoref{Sec:Geom}.  
The region $\favn_{\text{con}}$ does not constrain $\si{1}$, since it requires that $\si{1}$ not exceed $s$, which follows from the fact that $\svec \in \ssimpn$. Neither does $\favn_{\text{con}}$ constrain $\si{n+1}$.   We will now prove an intermediate result from which $\pcon$ follows; we extend this result to non-uniform i.i.d. parent pdfs in \autoref{Sec:NonUnif}.

We define the simplex  $\mathcal{T}'_{simp}$ and the generic hypercuboid $\mathcal{H}_{gen}$ as: 
\begin{eqnarray}
 \mathcal{T}'_{simp} &:=& \{ \svec_{1:n} \in \R_+^n : \1^T \svec_{1:n} \leq b  \},  \\
 \mathcal{H}_{gen} &:=&  \prod_{i=1}^{\nor}~ [0,a_i]. 
\end{eqnarray}
Let $\favn'_{gen} := \mathcal{T}'_{simp} \cap \mathcal{H}_{gen}$ and $\mathbf{a} = \begin{bmatrix} a_1 & \ldots & a_n\end{bmatrix}^T$.

\vspace{2mm}

\begin{lemma}
\label{thm:vhsp2}
 The volume of $\favn_{gen}'$ is given by
\begin{align}
\label{eqn:vhsp2}
\Vol(\favn_{gen}') =  \frac{1}{n!} \sum _{ \vvec \in \{0,1\}^{n}} (-1)^{\1^T \vvec} \left(\max( b-\mathbf{a}^T \vvec, 0)\right)^{\nor}. 
\end{align}
\end{lemma}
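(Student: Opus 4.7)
The plan is to reduce Lemma \ref{thm:vhsp2} to the already-established Lemma \ref{thm:vhsp} via a diagonal linear change of variables. Specifically, I would substitute $s_i = a_i u_i$, which maps the hypercuboid $\mathcal{H}_{gen}$ bijectively onto the unit hypercube $\hyp{cube} = [0,1]^n$ and transforms the simplex constraint $\1^T \svec \leq b$ into $\mathbf{a}^T \mathbf{u} \leq b$. Under this map, the region $\favn_{gen}'$ in $\svec$-space corresponds exactly to the region $\favn_{gen}$ of Lemma \ref{thm:vhsp} in $\mathbf{u}$-space, taken with coefficient vector $\mathbf{a}$ and right-hand side $b$.

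Since the Jacobian of the substitution is $\prod_i a_i$, I would then write $\Vol(\favn_{gen}') = \left(\prod_i a_i\right) \Vol(\favn_{gen})$ and plug in the formula from Lemma \ref{thm:vhsp}. The factor $\prod_i a_i$ cancels the matching denominator in \autoref{eqn:vhsp}, immediately yielding \autoref{eqn:vhsp2}. It is worth noting that the summation index $\vvec \in \{0,1\}^n$ in Lemma \ref{thm:vhsp} enumerates the vertices of the unit cube in $\mathbf{u}$-space, which are in bijection with the hypercuboid vertices $(v_1 a_1, \ldots, v_n a_n)$ of $\mathcal{H}_{gen}$; the quantity $\mathbf{a}^T \vvec$ in \autoref{eqn:vhsp2} then simply reads off the sum of coordinates of the corresponding hypercuboid vertex.

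An equally viable alternative is to re-run the inclusion--exclusion argument underlying Lemma \ref{thm:vhsp} directly in the hypercuboid setting. For each $\vvec \in \{0,1\}^n$, one defines the translated simplex $\Delta(\vvec) := \{\svec \in \mathcal{T}'_{simp} : \svec \geq (v_1 a_1, \ldots, v_n a_n)\}$, observes that it is a right simplex with equal legs of length $\max(b - \mathbf{a}^T \vvec, 0)$ and hence volume $\max(b - \mathbf{a}^T \vvec, 0)^n / n!$, and sums with the alternating signs dictated by PIE. In either approach, I expect the main obstacle to be purely bookkeeping: carefully verifying that the vertex-enumeration correspondence $\vvec \mapsto (v_1 a_1, \ldots, v_n a_n)$ and the $\max(\cdot, 0)$ clipping (which silently discards the vertices $\vvec$ lying outside $\mathcal{T}'_{simp}$) carry through without sign errors. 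The core geometric argument, however, is identical to that of Lemma \ref{thm:vhsp} and requires no new ideas.
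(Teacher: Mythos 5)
Your proposal is correct and follows essentially the same route as the paper's proof: a diagonal change of variables between the unit-cube setting of Lemma~\ref{thm:vhsp} and the hypercuboid setting, with the Jacobian factor $\prod_i a_i$ cancelling the denominator of \autoref{eqn:vhsp}. The only cosmetic difference is that you write the substitution in the inverse direction ($s_i = a_i u_i$ rather than $s_i' = a_i s_i$), which changes nothing.
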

\begin{proof}
Define $\mathcal{T}_{simp}$ and $\mathcal{H}_{cube}$   as in \autoref{Subsec:genshi}. Transform  the coordinates $\svec_{1:\nor}$ to $\svec'_{1:\nor}$, where  $\si{i}' := a_i s_i$, and note that the transformed simplex and hypercube are $\mathcal{T}'_{simp}$ and $\mathcal{H}_{gen}$, respectively. Defining $\mathbb{J}_{n\times n}$ as the Jacobian matrix of the transformation, we then have 
\begin{align}
 \Vol(\favn_{gen}) = \det(\mathbb{J}) \cdot \Vol(\favn'_{gen}).
\end{align}
The diagonal entries of $\mathbb{J}$ are $\mathbb{J}_{i,i} = \frac{1}{a_i}$, and the off-diagonal entries of $\mathbb{J}$ are zero. It follows that $\det(\mathbb{J}) = \prod \frac{1}{a_i}$. \autoref{eqn:vhsp2} follows immediately from \autoref{eqn:vhsp}.
\end{proof}

The slack simplex $\ssimpn_{\rm full}$ and the hypercuboid $\hyp{\text{con}}$ correspond to $\mathcal{T}'_{simp}$ and $\mathcal{H}_{gen}$, respectively.  Define $\mathbf{a} \in \mathbb{R}^n$ as a vector with $a_1 = s$  and $a_{2:n} = d$.  From \autoref{eqn:vhsp2}, we have: 
\begin{align}
\label{eqn:vfavcon1}
\Vol(\favn_{\text{con}}) = \frac{1}{n!} \sum _{ \vvec \in \hyp{\text{con}}} (-1)^{\1^T \vvec} (\max( s-\mathbf{a}^T \vvec, 0))^{n}.
\end{align}

Observing that  when $v_1=1$, we have that 
\begin{align}
\avec^T \vvec = s + {\sum_{i=2}^n d v_i} \geq s \implies  s - \avec^T \vvec ~{\leq}~ 0, \nonumber
\end{align}
so that the resulting simplex contributes no volume to $\favn_{\text{con}}$. Now suppose that $v_1=0$ and that $k$ bits among $\vvec_{2:n}$ are ones. The resulting simplex contributes a volume proportional to $(s-kd)^n$, and there are $\binom{n-1}{k}$ such simplices. Summing the volumes of these simplices  using the PIE, we get $\Vol(\favn_{\rm con})$, from which we obtain
\begin{align}
\label{eqn:pcon}
 \pcon := \frac{\Vol(\favn_{\rm con})}{\Vol(\ssimpnf)} = \sum_{i=0}^{\nmin } (-1)^{i} \binom{n-1}{i} \left(1-\frac{id}{s}\right)^{n},
\end{align}

which agrees with the result in \cite{godehardt1996connectivity}.

\subsection{Number of Connected Components, $\cmp$}
\label{Subsec:cmp}
We will first determine the probability mass function (pmf) $\Prob(\cmp =k)$ of the discrete rv $\cmp$.  First, note that $\Prob(\cmp=1)$ is equal to $\pcon$. If $\cmp = k+1$, then $\svec_{2:n}$ has exactly $k$ disconnected slacks. Let $\favn(\cmp=k+1)$ denote the subset of $\ssimpnf$ consisting of these slacks, and let $\unfavn(\vvec)$ denote the subset of $\ssimpnf$ consisting of slack vectors that obey $\svec_{1:n} \geq  d\vvec$. The subset $\favn(\cmp=k)$ has  $k-1$ disconnected slacks, not including the first unconstrained slack $s_1$. Let $\mathbf{V}(k)$ denote the set of bit vectors encoding the  indices of disconnected slacks in slack vectors with $k$ disconnected slacks (or equivalently, slack vectors having $k+1$ components):
\begin{align}
\label{eqn:vkm1}
\mathbf{V}(k) =  \{ \vvec \in \{0,1\}^n: v_0 = 0 ~\text{and}~ \1^T\vvec = k \}, 
\end{align}
so that 
\begin{align}
\label{eqn:cmpk}
\favn(\cmp=k) = \bigcup_{\vvec \in \mathbf{V}(k-1)} \unfavn(\vvec).
\end{align}

Following the same reasoning as in \autoref{Subsec:pmon}, the quantity $\sum \unfavn(\vvec)$ overestimates $\Vol(\favn(\cmp=k))$; applying the PIE gives us 
\begin{align}
\label{eqn:favcmpk}
 \Vol(\favn(\cmp=k)) &= \frac{1}{n!} \sum _{\vvec \in \mathbf{V}(k-1,\nmin-1)} (-1)^{\1^T \vvec} ( s-d\mathbf{1}^T \vvec)^{\nor},
\end{align}
where 
\begin{align}
\label{eqn:vkm1plus}
\mathbf{V}(k-1,\nmin-1):= \bigcup_{i=k-1}^{\nmin-1} \mathbf{V}(i).  
\end{align}

We can then derive $P(\cmp=k) = \frac{\Vol(\favn(\cmp=k))}{\Vol(\ssimpnf)}$, which simplifies to the expression
\begin{align}
 \Prob(\cmp =k) = \sum_{j=k-1} ^{\nmin-1} (-1)^{j+k-1}   \binom{n-1}{j} \binom{j}{k-1} \left(1-\frac{jd}{s}\right)^{n},
\end{align}

which agrees with the corresponding formula in \cite{godehardt1996connectivity}. We omit the full derivation for conciseness.

\todo{Proof: quote  Feller Volume 1 , page 106: Combinations of Events}

Now we will compute $\Exp(\cmp)$. If a particular slack  $s_j$ is disconnected, irrespective of the other slacks, then the resulting slack vector obeys the constraint $\1^T \svec_{j\neq i} \leq (s-d)$. From \autoref{Subsec:pmon}, this smaller simplex has a volume proportional to $(s-d)^{n}$, so that
\begin{align}
\label{eqn:s1con}
 \Prob(S_i > \con) = \left(1-\frac{d}{s}\right)^{n} ~~\textrm{and}~~~ \Prob(S_i \leq \con) = 1- \left(1-\frac{d}{s}\right)^{n}.
\end{align}
Then, by \autoref{eq:expcmp},
\begin{align}
\label{eqn:expcmp}
 \Exp(\cmp) = 1 + (n-1)  \left(1-\frac{d}{s}\right)^{n}.
\end{align}

\subsection{Sensed Length, $\slen$}
\label{Subsec:slen}
We define $\favn_{\sen}$ as the subset of $\ssimpn$ for which the boundary is fully sensed. Analogous to $\favn_{\mon}$, we express $\favn_{\sen}$ as the intersection between $\ssimpn$ and the sensing hypercuboid $\hyp{\sen} = [0,d]\times[0,2d]^{n+1}\times[0,d]$. The volume of $\favn_{\sen}$ is computed from Lemma \autoref{thm:vhsp2}.  The value of $\Exp(\slen)$ can then be determined from \autoref{eqn:expslen}. 

\vspace{2mm}

\begin{theorem}
\label{thm:pslen}
The probability $\psen$ that $\Bo$ is fully sensed is:
 \begin{align}
 \label{eqn:psen}
 \psen = 1 &- \sum_{i=1} ^{\nmin} (-1)^{i-1} \bigg[\binom{n-1}{i} \max\left(1-2i \frac{d}{s},0\right)^{n}  \nonumber \\
						 &+ ~2  \binom{n-1}{i-1} \max\left(1-  (2i -1)\frac{d}{s},0\right)^{n} \nonumber \\
						&+ \binom{n-1}{i-2} \max\left(1-(2i+1)\frac{d}{s},0\right)^n \bigg] .  
 \end{align}
 Here we adopt the convention that $\binom{n-1}{i-2}=0$ for $i=1$.
\end{theorem}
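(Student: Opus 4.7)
The plan is to mimic the computation of $\pmon$ in \autoref{Subsec:pmon}, replacing the single threshold $d$ by the two thresholds that together characterize full sensing coverage: namely $d$ for the two endpoint slacks $s_1,s_{n+1}$ (each sensed by only one robot) and $2d$ for the $n-1$ interior slacks $s_2,\ldots,s_n$ (each sensed from both sides). Since $\jsla$ is uniform on $\ssimpn$, we have $\psen = \Vol(\favn_{\sen})/\Vol(\ssimpn) = 1 - \Vol(\unfavn_{\sen})/\Vol(\ssimpn)$, where $\unfavn_{\sen} := \ssimpn \setminus \favn_{\sen}$. The rest of the argument computes $\Vol(\unfavn_{\sen})$ by inclusion-exclusion over the events ``slack $s_i$ exceeds its sensing threshold.''

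For each bit-vector $\vvec \in \{0,1\}^{n+1}$ marking a candidate set of bad slacks, the shift $s_i' = s_i - c_i$, with $c_i = d$ for $i \in \{1,n+1\}$ and $c_i = 2d$ otherwise, transforms the region $\{\svec \in \ssimpn : s_i \geq c_i \text{ whenever } v_i = 1\}$ into a regular simplex of side $\sqrt{2}\max(s - \avec^T\vvec, 0)$, where $\avec := (c_1,\ldots,c_{n+1})^T$. Its volume is therefore proportional to $\max(s-\avec^T\vvec, 0)^n$, exactly paralleling the individual terms in \autoref{eqn:unfavnmon}, and the scaling constant cancels when we divide by $\Vol(\ssimpn)$.

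I then partition the bit-vectors by $(k_{\mathrm{end}}, k_{\mathrm{mid}})$, where $k_{\mathrm{end}} := v_1 + v_{n+1} \in \{0,1,2\}$ counts endpoint failures and $k_{\mathrm{mid}} := \sum_{i=2}^{n} v_i \in \{0,\ldots,n-1\}$ counts interior failures. The three cases $k_{\mathrm{end}}=0,1,2$ produce the three binomial families appearing inside the brackets of \autoref{eqn:psen}: for a given $(k_{\mathrm{end}},k_{\mathrm{mid}})$ the number of bit-vectors is $\binom{n-1}{k_{\mathrm{mid}}}$, weighted by $1$, $2$, or $1$ respectively, the factor of $2$ in the $k_{\mathrm{end}}=1$ case arising from the choice of which endpoint fails. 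Reindexing by the total $i := k_{\mathrm{end}} + k_{\mathrm{mid}}$ shifts $k_{\mathrm{mid}}$ to $i$, $i-1$, and $i-2$ in the three families, and the PIE sign $(-1)^{i-1}$ then assembles them into the single alternating sum of \autoref{eqn:psen}. The upper cutoff $i \leq \nmin$ is automatic because the simplex side $s - \avec^T \vvec$ becomes nonpositive beyond that, and the stated convention $\binom{n-1}{-1} := 0$ silently suppresses the $k_{\mathrm{end}}=2$ contribution at $i=1$, where it is geometrically impossible.

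The main obstacle is the PIE bookkeeping across the three families simultaneously: tracking the combinatorial factor of $2$ for asymmetric endpoint failures, using $\binom{n-1}{\cdot}$ rather than $\binom{n+1}{\cdot}$ to count interior-slack failure patterns (because there are $n-1$ interior slacks, not $n+1$), and verifying that the shifted binomial coefficients correctly handle the $i=1$ and $i=2$ boundary cases under the stated conventions. Once those details are checked, substituting back into $\psen = 1 - \Vol(\unfavn_{\sen})/\Vol(\ssimpn)$ yields the claimed formula.
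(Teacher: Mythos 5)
Your approach is the same one the paper uses: pass to the complement, apply inclusion--exclusion over the bad events $\{s_1>d\}$, $\{s_{n+1}>d\}$, $\{s_i>2d\}$ for $i=2,\ldots,n$, compute each intersection as a shifted regular sub-simplex of volume proportional to $\max(s-\avec^T\vvec,0)^n$, and split the bit-vectors by the number of endpoint failures $k_{\mathrm{end}}\in\{0,1,2\}$ to get the three families $\binom{n-1}{i}$, $2\binom{n-1}{i-1}$, $\binom{n-1}{i-2}$. The counting, the factor of $2$, and the use of $\binom{n-1}{\cdot}$ for the $n-1$ interior slacks are all right, and your treatment is if anything cleaner than the paper's (which speaks of $n$-bit vectors $\vvec_{1:n}$ while indexing $v_{n+1}$).

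The gap is the final step, ``assembles them into the single alternating sum of \autoref{eqn:psen},'' which you assert without checking. By your own accounting, the $k_{\mathrm{end}}=2$, $k_{\mathrm{mid}}=i-2$ family has $\avec^T\vvec = 2d+2(i-2)d=(2i-2)d$, so the third bracketed term should carry $\max(1-(2i-2)d/s,0)^n$, not $\max(1-(2i+1)d/s,0)^n$ as in the statement. The cleanest check is the $i=2$, $k_{\mathrm{mid}}=0$ term: it is exactly $\Prob(S_1>d,\ S_{n+1}>d)=\max(1-2d/s,0)^n$, whereas the displayed formula's third family at $i=2$ contributes $\binom{n-1}{0}\max(1-5d/s,0)^n$; working out $n=2$ in full confirms the mismatch. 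So your derivation is sound up to the assembly, but it does not produce the printed formula --- there is no consistent reindexing that turns $(2i-2)$ into $(2i+1)$. (The paper's own proof has the same defect: its case~3 simply asserts the hyperplane $\vvec^T\svec=s-(2i+1)d$ without deriving it.) You should carry the $(2i-2)$ through and explicitly flag the discrepancy with the theorem as stated, rather than claiming agreement.
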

\begin{proof}
 We derive \autoref{eqn:psen} by analogy with $\pmon$. When computing $\pmon$, every bit vector with $i$ ones led to a choice of $\binom{n+1}{i}$ simplices in \autoref{eqn:unfavnmon}, each contributing a probability of $\max(1-id/s,0)^{n}$ in \autoref{eqn:pmon}. Here, we need to treat the slacks $s_1$ and $s_{n+1}$ differently from  the others. Consider the set of all $n$-bit vectors $\vvec_{1:n}$ that iterate through the vertices of $\hyp{\sen}$. 
 \begin{enumerate}
  \item If $v_1=v_{n+1}=0$, then $i$ of the remaining $n-1$ bits must be equal to $1$. The set of all such $\vvec$ creates $\binom{n-1}{i}$  simplices  of the form $\vvec ^T\svec = s-2id$ that contribute to $\Vol(\favn_{\sen})$.
  \item If $v_1=0, ~v_{n+1}=1$ or $v_1=1, ~v_{n+1}=0$, then $i-1$ of the remaining bits must be equal to $1$. The set of all such $\vvec$ creates $2\binom{n-1}{i-1}$ simplices of the form $\vvec^T \svec= s - (2i-1)d$ that contribute to $\Vol(\favn_{\sen})$ . 
  \item If $v_1=v_{n+1}=1$, then $i-2$ of the remaining $n-1$ bits are equal to $1$. Consequently, the set of all such $\vvec$ creates $\binom{n-1}{i-2}$ simplices. Each simplex has the form $\vvec^T \svec = s-(2i+1)d$ and contributes to $\Vol(\favn_{\sen})$.
 \end{enumerate}
\end{proof}

\subsection{Vertex Degree, $\dgr$}
\label{Subsec:vertdeg}
When two points $(x_1,x_2)$ are selected at random on $[0,s]$, we may write their joint pdf in terms of their order statistics:
\begin{align}
\label{eqn:fjoint2}
 f(\pos{1},\pos{2}) = \frac{2}{s^2} \ind_{0\leq \pos{1}\leq \pos{2} \leq s}. 
\end{align}

Using \autoref{eqn:fjoint2}, the probability term in \autoref{eq:expdgr1} is 
\begin{eqnarray}
&\Prob&\hspace{-3mm}(|X_1 - X_2| \leq \con)  \nonumber \\
&=& \frac{2}{s^2} \left( \int_{\pos{i} =0} ^{s-d} \int_{\pos{2}=\pos{1}} ^{\pos{1}+d} {\rm d} \pos{2} {\rm d} \pos{1} + \int_{\pos{1} =s-d} ^{s} \int_{\pos{2}=\pos{1}} ^{s} {\rm d}\pos{2} {\rm d} \pos{1} \right) \nonumber \\
 &=& \frac{2d s-d^2}{s^2},
\end{eqnarray}
which yields
\begin{align}
\label{eqn:expdeg}
 \Exp(\dgr) = (n-1)\frac{2d s-d^2}{s^2}.
\end{align}

\section{CF Coverage with I.I.D. Uniform Parents}
\label{Sec:UnifCF}

\end{comment}

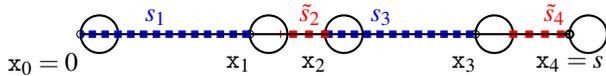
\begin{figure}
   \begin{tikzpicture}

\coordinate(p0) at (-2.25,0);
\coordinate(p4) at (4.25,0);

\coordinate  (a) at (0,0);
\coordinate  (b) at (1,0);
\coordinate  (c) at (3.,0);

\draw[thick] (a) circle (0.25cm);
\node[draw=none,align=left] at (-0.4,-0.4) { $\pos{1}$} ;
\draw[black] (-0.25,0) circle (.05cm); 

\draw[thick] (b) circle (0.25cm);
\node[draw=none,align=left] at (0.6,-0.4) { $\pos{2}$} ;
\draw[black] (0.75,0) circle (.05cm);

\draw[thick] (c) circle (0.25cm);
\node[draw=none,align=left] at (2.6,-0.4) { $\pos{3}$} ;
\draw[black] (2.75,0) circle (.05cm); 

\draw[thick] (p0) circle (0.25cm);
\node[draw=none,align=left] at (-3.0,-0.4) { $\pos{0}=0$} ;
\draw[black] (-2.5,0) circle (.05cm); 

\draw[thick] (p4) circle (0.25cm);
\node[draw=none,align=left] at (4.0,-0.4) { $\pos{4}=s$} ;

     \draw[color=blue] (-1.5,0) node[anchor=south] {$\si{1}$};
     \draw[color=blue][dotted][line width=.1cm] (-.25,0) -- (-2.5,0);

     \draw[color=blue] (1.5,0) node[anchor=south] {$\si{3}$};
     \draw[color=blue][dotted][line width=.1cm] (0.75,0) -- (2.75,0);

      \draw[color=red] (0.55,0) node[anchor=south] {$\fsi{2}$};
     \draw[color=red][dashed][line width=.1cm] (0.8,0) -- (0.15,0);

         \draw[color=red] (3.8,0) node[anchor=south] {$\fsi{4}$};
     \draw[color=red][dashed][line width=.1cm] (3.2,0) -- (4,0);

\draw[thick] (-2.5,0) -- (+4,0);

\draw[thick] (4,0) circle (.05cm);


\end{tikzpicture}
\caption{Slacks (blue) and free slacks (red) of the robot configuration in \autoref{fig:Attach2}.  }
\label{fig:Att3}
\end{figure}


We will now analyze the graph $\G$ of a CF configuration and the resulting CF geometric objects.   In \autoref{fig:Att3}, the left virtual robot $\Rob_{0}$ has position $\upos{0}=\pos{0} = 0$ and occupies the interval $[0,\dia]$; the right virtual robot is located at $\pos{n+1} = s$, occupying the interval $[s,s+D]$. Robots $\Rob_{1:n}$ are located at positions on the interval $[\dia,s-\dia]$, which has length $s-2\dia$.   

We define $\ssimpcf$ to be the subset of $\ssimpn$ consisting of CF slack vectors. An alternative way to characterize $\ssimpcf$, to be used later, is through \textit{free slacks}. We define the \textit{free slack} $\fsi{i}$ associated with $\si{i}$ to be the length of that subinterval of $[\pos{i-1},\pos{i}]$ that falls outside the body of any robot. From \autoref{fig:Att3}, we see that $\fsi{i} = \si{i} - \dia$. We define the vector of free slacks as $\fsvec_{1:n+1}$, which must satisfy
\begin{align}
\label{eqn:fsvec}
 \1^T \fsvec_{1:n+1} = \fsla := s - (n+1) \dia.
\end{align}
Here, $\fsla$ is the total free slack, which is the effective amount of slack in a CF robot configuration. We call the set of slacks obeying \autoref{eqn:fsvec} the \textit{free slack simplex} $\tilde{\ssimpn}$. Analogously to $\ssimpnf$, we define the full-dimensional equivalent of $\ssimpn_{\rm CF, full}$  by
\begin{align}
 \ssimpn_{\rm CF,full} := \{\fsvec_{1:n} \in \R_+^n : \1^T\fsvec \leq \fsla \}.
 \end{align}
 
The volume of $\ssimpcf$ remains the same regardless of whether it is expressed in terms of free slacks or slacks, so that
\begin{align}
\label{eqn:ssimpcf}
 \Vol(\ssimpn_{\rm CF,full}) =  \frac{\fsla^n}{n!}.
\end{align}

We will assume that $\ssimpn$ and $\favn$ are represented in full-dimensional form throughout this section. For simplicity, we will not discuss the constraint $\si{n+1}\in [\dia,\con]$ on the last slack, which can be handled in a fashion similar to equations \ref{eqn:2inter} and \ref{eqn:2intervol}. 

\subsection{Probabilities $\pmon$ and $\pcon$}
\label{Subsec:pmoncf}
We will first introduce the geometric concepts for computing $\pmon$. Analogous to the CT case, we define the monitoring hypercube to be $\hyp{\rm CF, \mon} =  [\dia,\con]^{n+1}$ and the favorable region for monitoring to be $\favn_{\rm CF,\mon}:= \ssimpnf \cap \hyp{\rm CF, \mon}$. Note that the CF constraint is handled in the hypercube, and not in the slack simplex. Since $\hyp{\rm CF, \mon}$ is not of the form $[0, c]^{n+1}$ for some constant $c$, the regions defined by $ \ssimp(\vvec):= \{\svec \in \ssimpn: \svec \geq d \vvec \} \cap \hyp{\rm CF, \mon}$ are no longer simplices, and so we cannot apply Lemma \autoref{thm:vhsp} directly to compute $\Vol(\favn_{\rm CF,\mon})$. This problem cannot be remedied by transforming slacks into free slacks.  Instead, we extend  Lemma \autoref{thm:vhsp} to the case where the simplex $\mathcal{T}_{simp}$, defined in \autoref{eqn:tsimp}, is intersected with a {\it displaced} hypercuboid.  We define a displaced hypercuboid by specifying its diagonally opposite vertices, $\vvec_c$   and $\vvec_f$, which are respectively its closest vertex to $\0$ and its farthest vertex away from $\0$: 
\begin{align}
\label{eqn:dispcub}
\hyp{\vvec_c,\vvec_f} = \{ \svec_{1:n} \in \R_+^n : \0 \leq \vvec_c \leq \svec \leq \vvec_f \}. 
\end{align}
The vertex pair $(\vvec_c,\vvec_f)$ encodes all information about the faces of the hypercuboid. Now we will use \autoref{alg:geninter} to compute the volume of the intersection $\favn:= \mathcal{T}_{simp} \cap \hyp{\vvec_c,\vvec_f}$. To do so, the algorithm computes the volume of $\mathcal{T}_{simp} \cap \hyp{\0, \vvec_f}$, which overestimates $\Vol(\favn)$, in step \ref{step:overV}. Subsequently, it subtracts the volumes of overlap between $\mathcal{T}_{simp}$ and the exterior of $\hyp{\vvec_c,\vvec_f}$, i.e. the region $\hyp{\0, \vvec_f} \setminus \hyp{\vvec_c, \vvec_f}$, using the PIE. This exterior is the union of hypercuboids $\hyp{b,i}$ that lie between $\hyp{\vvec_c,\vvec_f}$ and $\0$. The volume of intersection of $\mathcal{T}_{simp}$ with this exterior, denoted by $\hyp{inter}$, is computed using the PIE in step \ref{step:compvol}, and either added to or deducted from $V$ in step \ref{step:extpie}. \autoref{Fig:CFInter} illustrates the operation of \autoref{alg:geninter} in two dimensions.

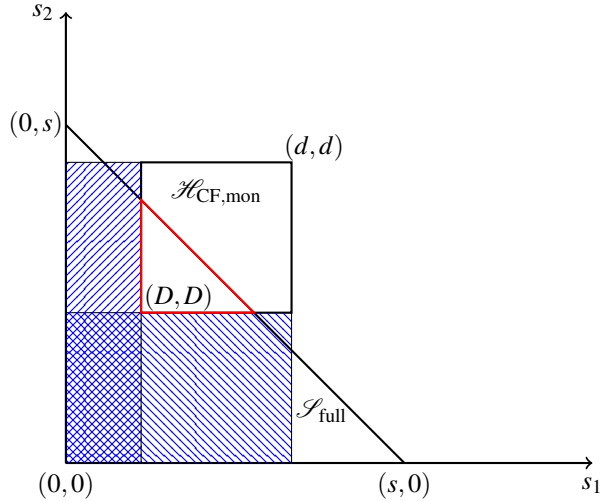
\begin{figure}[t]
\usetikzlibrary{arrows,snakes,backgrounds,patterns,matrix,shapes,fit,calc,shadows,plotmarks}
\begin{tikzpicture}

\draw[thick,->] (0,0)  -- (7,0);
\draw[thick,->] (0,0) -- (0,6);

\draw[thick] (1,2) -- (3,2) -- (3,4) -- (1,4) -- (1,2);

\draw[thick] (4.5,0) -- (0,4.5);

\draw[pattern=north west lines, pattern color=blue] (0,0) rectangle (3,2);
\draw[pattern=north east lines, pattern color=blue] (0,0) rectangle (1,4);

\draw(0,-0.3) node {$(0,0)$};
\draw(7,-0.3) node {$\si{1}$};
\draw(-0.3,6) node {$\si{2}$};

\draw(4.5,-0.3) node {$(s,0)$};
\draw(-0.4,4.5) node {$(0,s)$};

\draw(1.5,2.2) node {$(D,D)$};

\draw(3.3,4.2) node {$(d,d)$};

\draw(2.0,3.6) node {$\hyp{\rm CF,mon}$};
\draw(3.4,0.7) node {$\ssimpnf$};

  \draw[red,thick] (1,2) -- (2.5,2) -- (1,3.5) -- cycle;

\end{tikzpicture}
\caption{Illustration of \autoref{alg:geninter} for $n=2$. The simplex $\ssimpnf$ and the hypercube $\hyp{\rm CF,\mon}$ are represented in full-dimensional form, and constraints on $\si{3}$ are not shown for simplicity.  $\ssimpnf$ is the triangle with vertices $\{(0,0),(0,s),(s,0)\}$ and $\hyp{\rm CF,mon}$ is the square $[D,d]^2$. \autoref{alg:geninter} computes the area of intersection of $\ssimpnf$ with $[0,d]^2$, from which it subtracts the  area of $\ssimpnf$ under the hatched region to obtain the area of the red triangle $\favn_{\rm CF,mon}$.}
\label{Fig:CFInter} 
\end{figure}

\vspace{5mm}

\begin{algorithm}
 \caption{Find volume of intersection between a general simplex and a displaced hypercuboid}
 \begin{algorithmic}[1]
 \Procedure{FIND-INTER-VOL}{$\mathcal{T}_{simp}(\avec,b),\mathcal{H}_{\vvec_c,\vvec_f}$}
 \State $\mathbf{d}_{1:n} \gets \mathbf{v}_f - \mathbf{v}_c$  \Comment{Edges of $\hyp{\vvec_c,\vvec_f}$}
 \State $V \gets \Vol(\mathcal{T}_{simp} \cap \hyp{\0, \vvec_f})$ \Comment{Compute using Lemma \autoref{thm:vhsp2}} \label{step:overV}
 \For{$i\gets 1\ldots n$} 
  \State $\hyp{b,i} = \hyp{\0, \vvec_i}$ where $\vvec_i = \vvec_c + (\vvec_{f,i} - \vvec_{c,i}) \mathbf{e}_i$ 
 \EndFor
 \For{$\mathbf{v} \in \{0,1 \}^{n}$}  \label{step:viter}
  \State $\hyp{inter}(\vvec) \gets \bigcap_{v_i =1} \hyp{b,i} $
  \State Compute $\Vol(\mathcal{T}_{simp} \cap \hyp{inter})$ using Lemma \autoref{thm:vhsp2} \label{step:compvol}
  \State $V \gets V + (-1)^{\1^T \vvec} \Vol(\hyp{inter}) $ \label{step:extpie} 
 \EndFor
 \State Return $V$
 \EndProcedure
 \end{algorithmic}
 \label{alg:geninter}
\end{algorithm}

The hypercube $\hyp{\rm CF, \mon}$ is an instance of $\hyp{\vvec_c,\vvec_f}$ with $\vvec_c = \dia \1^T$ and $\vvec_f = \con \1^T$. Since $\hyp{\rm CF,\mon} = [\dia,\con]^n$ is a hypercube, it has identical faces, so  each of the hypercuboids $\hyp{b,i}$ has the same set of edges. Further, the simplex $\ssimpn$ is an instance of $\mathcal{T}_{simp}$ with $\avec=\1$; since the coefficients $a_i$ are all unity, it follows that $\hyp{inter}(\vvec)$ and $\hyp{inter}(\vvec')$ are congruent for any two $n$-bit vectors $\vvec$ and $\vvec'$ with the same number of 1-bits.  Thus, we need to compute the volumes of regions of the form $\hyp{inter} (\vvec)$, where $\vvec$ consists of a set of successive ones followed by zeros; for example, $\vvec = (1,1,...,1,0,...,0)$. In other words, $\vvec$ can be expressed as the sum of unit vectors $\sum_{j=1}^{i}\mathbf{e}_j$. We thus have 
\begin{align}
\label{eqn:volfavcf}
\Vol&(\favn_{CF,\mon}) = \Vol(\ssimp \cap [0,\con]^{n+1}) - \nonumber \\ 
&\sum_{i=1}^{n+1} (-1)^{i} \binom{n+1}{i} \Vol \left(\ssimp \cap \hyp{inter}(\sum_{j=1}^i \mathbf{e}_j) \right),
\end{align}
and $\pmon = \frac{\Vol(\favn_{CF,\mon})} {\Vol(\ssimpn)}$. 

Running \autoref{alg:geninter} with inputs $\mathcal{T}_{simp}:=\ssimpnf$ and $\hyp{\vvec_c, \vvec_f}:=\hyp{\rm CF,con} = [\dia,s] \times [\dia,\con]^{n-1}$ furnishes the volume of the favorable region $\favn_{\rm CF,\text{con}}$ for a connected CF configuration, and thereby $\pcon$. 

\subsection{Probability  $\psen$}
Computing $\psen$ requires us to consider the intersection between $\ssimpn$ and the hypercuboid
\begin{align}
 \hyp{\rm CF, \sen}:= [D,d] \times [D,2d]^{n-1} \times[D,d].
\end{align}
Both $\ssimpn$ and $\hyp{\rm CF,\sen}$  are degenerate polytopes. To compute their volume of intersection, we express the intersection $\favn_{\rm CF, \sen}$ in full-dimensional form, as in \autoref{eqn:2inter}. Define $\hyp{\rm full}:= [D,d]\times [D,2d]^{n-1}$ to be the full-dimensional form of $\hyp{\rm CF, \sen}$. Following the approach of \autoref{eqn:2inter},  
we can compute the full-dimensional form of $\favn_{\rm CF,\sen}$ as
\begin{align}
 \favn_ {\rm full, sen} : = \hyp{\rm full} \cap \ssimpnf \cap \ssimpnf(s_{n+1} \in  [D,d]), 
\end{align}
where $\ssimpnf(s_{n+1} \in  [D,d])$ is defined as the subset of $\ssimpnf$ in which the last slack $\si{n+1}=s-\1^T\svec_{1:n}$ lies in $[D,d]$.  Then we can determine the volume of $\favn_{\rm full,\sen}$ as:
\begin{align}
\label{eqn:2intersen}
\Vol(\favn_{\rm full,\sen}) &= \Vol(\ssimpnf \cap \hyp{\rm full}) \nonumber \\
  & ~~~~ - \Vol({\ssimpnf}(s_{n+1} > d) \cap \hyp{\rm full}) \nonumber \\
  & ~~~~ - \Vol({\ssimpnf}(s_{n+1} < D) \cap \hyp{\rm full}),
\end{align}
where the region ${\ssimpnf}(s_{n+1} > d)$ is the subset of $\ssimpnf$  over  which $\si{n+1}$  is disconnected, and likewise ${\ssimpnf}(s_{n+1} < D)$ is the subset of $\ssimpnf$ for which the last slack results in a conflict. We run \autoref{alg:geninter} once for each volume of intersection in \autoref{eqn:2intersen} to obtain $\Vol(\favn_{\rm full,\sen})$. We can then compute $\psen=\frac{\Vol(  \favn_{\rm CF, \sen})} {\Vol(\ssimpnf)}$ and, by \autoref{eqn:expslen}, $\Exp(\slen) = s \cdot \psen$.

\subsection{Number of Connected Components, $\cmp$}
\label{Subsec:cmpcf}
Proceeding as in \autoref{Subsec:cmp}, we define $\favn_{\text{CF}}(\cmp=k)$ to be the subset of $\ssimpcf$ whose slacks have $k$ connected components. We now write the CF equivalent of \autoref{eqn:favcmpk}, replacing $\favn(\cmp=k)$ with its CF counterpart $\favn_{\text{CF}} (\cmp=k)$. To compute  the volume of this region, we can use \autoref{alg:geninter} with the inputs
\begin{align}
 \mathcal{T}_{simp} := \ssimpnf ~\text{and}~ \hyp{\vvec_c,\vvec_f} := [0,s] \times [d-D]^{n-1} \nonumber,
\end{align}
and execute the loop in step \ref{step:viter} of \autoref{alg:geninter} only over bit vectors obeying \autoref{eqn:vkm1plus}. 

\subsection{Free Slack Approximation of Coverage Properties}
\label{Subsubsec:fslapp} 
The probability $\pmon$ is more complex to compute in the CF case, where  \autoref{alg:geninter} must be used, than in the CT case, where \autoref{eqn:pmon} can be applied. To aid the design and implementation of algorithms for CF coverage, we approximate $\favn_{\text{CF},\mon}$ by $\tilde{\mathcal{F}}_{\text{CF},\mon}$, defined as the intersection between the free slack simplex $\tilde{\ssimpn}$  and the hypercube $\tilde{\mathcal{H}}_{\mon}:=[0,\tilde{d}]^{n+1}$, where $\tilde{d}:=d-D$. Likewise, by intersecting $\tilde{\ssimpn}$ with $\tilde{\mathcal{H}}_{\text{con}}:=[0,\fsla] \times [0,\tilde{d}]^{n-1} \times [0,\fsla]$, we obtain an approximation of the favorable region of connectivity, $\tilde{\mathcal{F}}_{\text{CF,con}}$. The volumes of both these intersections have the form of \autoref{eqn:vhsp}. Hence, the formulas for the resulting probabilities of monitoring and connectivity, denoted by $\tilde{p}_{\mon}$ and $\tilde{p}_{con}$, can be expressed as the equations \ref{eqn:pmon} and \ref{eqn:pcon} for $\pmon$ and $\pcon$ with $(\fsla,\tilde{d})$ substituted for $(s,d)$.  

Similarly, we can use these parameter substitutions in equations \ref{eqn:expcmp}, \ref{eqn:psen}, and \ref{eqn:expdeg} to approximate  $\Exp(\cmp)$, $\psen$ (and thus $\Exp(\slen)$), and $\Exp(\dgr)$, respectively.  We call this approximation the \textit{free slack approximation} (FSA) for the CF case.  
\section{Approximations of Coverage Properties}
\label{Sec:Estim}
 In this section, we develop order-of-magnitude approximations of the graph properties derived in \autoref{Sec:Unif} and \autoref{Sec:UnifCF}, using the concepts of {\it threshold functions} \cite{friezeRandomGraphs, goel2004sharp, muthukrishnan2005bin} and \textit{Poissonization} 
\cite{barbour1992poisson} that are described in \autoref{Sec:Bground}. We will use threshold functions in our design procedure in \autoref{Sec:Design}.  We note that the estimates below are also valid for CF configurations when the FSA substitutions $(s,d) \rightarrow (\fsla,\tilde{d})$ are applied. 

\subsection{Connectivity and Monitoring Thresholds}
\label{Subsec:connthresh}
The uniform parent pdf has the special property that the slack vector $\rvsvec$ is jointly uniform over $\ssimpn$, with each slack being identically distributed (though not independent) as  scaled exponentials of the form $s \cdot \mathrm{Exp}(1)$. Suppose we sort the slacks in $\rvsvec$ in increasing order to obtain the vector of their order statistics, $\orsvec$. Then we find that  \cite{ostat2003} 
\begin{eqnarray}
 \Exp({\orvsi{i}}) &=& \frac{s}{n+1} \sum \limits_{j=i} ^{n+1} \frac{1}{j} ~=~ \frac{s}{n+1} (H_{n+1} - H_{i}), \\
 \Var({\orvsi{i}}) &=& \sum \limits_{j=i} ^{n+1} \frac{1}{j^2},
\end{eqnarray}
where $H_n$ denotes the harmonic numbers. If $\nor$ is large,  we may approximate  $H_n$ by  $\log n$.  The longest slack $\orvsi{n+1}$ has the expected value $\Exp(\orvsi{n+1}) = \frac{s}{n+1}H_{n+1} \approx \frac{s}{n+1}\log(n+1)$ for large $n$.  To enforce the monitoring condition, we impose the constraint that the expected longest slack $\Exp(\orvsi{n+1})$ is connected,
\begin{align}
\label{eqn:harmsd}
 \frac{\log(n+1)}{n+1} \leq  \frac{d}{s} ~\implies~ n = \exp\left(-\mathrm{W}\left(\frac{d}{s}\right)\right) - 1,
\end{align}
where $W$ is the Lambert W function \cite{yi2010time}. 
 
From \cite{Matthew2003RGG}, a pair $(n,d)$ with $d =  O(\frac{n} {\log n})$ forms a \textit{connectivity threshold} for $\G$, implying that the estimate for $n$ is sharp. Further, since this function thresholds connectivity, it automatically thresholds monitoring too.  We can use a result from \cite{muthukrishnan2005bin} to determine that a triple $(n,d,s)$ satisfying  
\begin{align}
\label{eqn:shthresh}
nd = \Theta(s\log s ).
\end{align}
forms a sharp {\it monitoring threshold}.

\subsection{Threshold for $\Exp(\dgr)$}
\label{Subsec:expdeg}
From \cite{Matthew2003RGG}, a sequence of pairs $(n,d)$ satisfying  $d = O(\frac{1}{n})$ results in $\Exp(\dgr)$ tending to a positive constant, and this sequence is  called the \textit{thermodynamic limit}. Likewise, pairs  $(n,d)$ that obey $d= O(\frac{\log n}{n})$ cause $\Exp(\dgr)$ to have growth of order $O(\log n)$ and comprise the \textit{connectivity regime}, which is a threshold for the property that $\G$ has no isolated vertices (i.e. vertices of zero degree). A {\it superconnectivity regime} has $d = \Omega(\frac{\log n}{n})$, which ensures that $\G$ has no isolated vertices a.s. Likewise, the {\it subconnectivity  regime} with $d=o(\frac{\log n}{n})$ ensures that $\G$ has one or more isolated vertices a.s.

\subsection{Poisson Approximation of Coverage Properties}
\label{Subsec:paunif}

\subsubsection{Poissonizing $\Nmon$ and $\Ncon$} 
\label{Subsec:ppmon}
We derive Poisson estimates for $\Nmon$ and $\Ncon$ using results from  \cite[Ch.7]{barbour1992poisson}. The key observation is that the slack rv's  $\rvsla{i:1,\ldots,n+1}$, are negatively associated, since an increase in one slack leads to a corresponding decrease in the others.  Applying Lemma \autoref{thm:poisson} to $\Nmon$, we see that $\Nmon$ is approximately distributed as $\Poi (\lambda_{\mon})$ with $\lambda_\mon = \sum_{i=1}^{n+1}\Exp( \ind_{S_i \leq \con})$.  The slacks' exchangeability implies that $S_1\sim S_i$, so that $\lambda_\mon = (n+1) \Prob (S_1\leq \con)$.
From  \autoref{eqn:s1con}, we obtain
\begin{align}
\label{eqn:lambdamon}
 \lambda_\mon = (n+1)  \left(1- \left(1-\frac{d}{s}\right)^{n}\right).
\end{align}
Using \autoref{eq:Ncon}, the analogous Poisson rv for $\Ncon$ has mean $\lambda_{\mathrm{con}}=(n-1) \Prob(S_1\leq \con)$.  These Poisson approximations are valid for regimes $(n,d)$ for which \cite[Ch.7]{barbour1992poisson}
\begin{align}
\label{eqn:limpoi}
 n \rightarrow \infty, ~\frac{d}{s}\rightarrow 0, ~\text{and}~ n \frac{d}{s} \rightarrow \frac{\lambda_\infty + o(1)}{n}, 
\end{align}
where $\lambda_\infty$ is a finite constant. 

We may approximate $\pmon$ using $\Nmon$ by noting that when $\Nmon=n+1$, the entirety of $\Bo$ is monitored. Let $\Lambda \sim \Poi(\lambda_{\mon})$; then we have that
\begin{align}
\label{eqn:poipmon}
\pmon := \Prob(\Nmon=n+1) \approx \Prob (\Lambda = n+1) \nonumber \\ = \frac{ \lambda_{\mon}^{n+1}  \exp(-\lambda_{\mon}) }{(n+1)!}.
\end{align}


Likewise, we have for $\pcon$ :
\begin{align}
 \label{eqn:poipcon}
 \pcon := \Prob(\Ncon=n-1) \approx \frac{ \lambda_{\rm con}^{n-1}  \exp(-\lambda_{\rm con}) }{(n-1)!}.
\end{align}

\subsubsection{Poissonizing $\Nsen$} 
Reasoning similarly for $\pcon$, we have $\Nsen \sim \Poi(\lambda_{sen})$, where 
\begin{align}
 \lambda_{\sen} &= \sum_{i=2}^{n}  \Exp(\ind_{S_i \leq 2\con}) + \Exp(\ind_{S_1 \leq \con}) + \Exp(\ind_{S_{n+1} \leq \con})  \nonumber \\
 &= n  \left(1- \left(1-\frac{2d}{s}\right)^{n}\right) + 2  \left(1- \left(1-\frac{d}{s}\right)^{n}\right),
 \end{align}
 and thereby approximate $\psen$ as 
 \begin{align}
  \psen = \Prob(\Nsen = n+1) \approx \frac{\lambda_\sen^{n+1} \exp(-\lambda_{\sen})} { (n+1)!},
  \end{align}
  whose approximation error tends to zero when \autoref{eqn:limpoi} holds.

\subsubsection{Poissonizing $\cmp$}
\label{Subsec:estimcomp}
Using definition \autoref{eqn:cmpnum} for $\cmp$, we can apply our approach for estimating $\pmon$ and $\pcon$ to obtain $\cmp \sim \Poi(\lambda_{\cmp})$, where  
\begin{align}
 \lambda_{\cmp} = 1 + (n-1) \Exp(\ind_{S_1 > d}) =  1+ (n-1) \left(1- \frac{d}{s}\right)^{n}.
\end{align}


\section{Design for Target Coverage Properties}
\label{Sec:Design}

We now apply our analytical results to select the number of robots $n$ that will achieve target properties of $\G$, such as a specified value of $\pcon$, when performing stochastic boundary coverage.  Our design problems require \textit{inverting} the equations that we have derived for these properties.  Although here we assume that the robot diameter $D$, robot communication radius $d$, and boundary length $s$ are fixed, as will be the case in many coverage applications, in general we may also use our procedure to compute these parameters to achieve specified properties of $\G$.  Due to the nonlinear dependency of each property on the parameters $(s,n,\dia,d)$, the inversion is performed using numerical methods. Our source code in  \cite{kumar_polytope_2016} implements this inversion procedure using the $\tt{scipy~ fsolve}$ numerical solver.  

We illustrate the computation of $n$ in both  conflict-tolerant (CT, \autoref{Sec:Unif}) and conflict-free (CF, \autoref{Sec:UnifCF}) coverage scenarios with a uniform i.i.d. parent pdf.  Our solver source code for these two cases can be accessed at \cite{kumarUniformCT2016} and \cite{kumarUniformCF2016}, respectively.  In the CF case, we use the free slack approximation (FSA) to compute $n$ for each property.

In our example, we assume that $s=200$, $d=5$, and $\dia=1$. Our objective is to compute the value of $n$ that yields each of the following properties separately: (1) $\pmon= 0.80$, (2) $\pcon=0.70$, (3) $\Exp(\cmp) = 4$, (4) $\Exp(\slen)=0.6s$, and (5) $\Exp(\deg)=5$.  

\begin{enumerate}
 \item $\pmon=0.80$: To compute an initial guess of $n$ for the solver $\tt{fsolve}$, we note that a length of $s_{\mon}=s\pmon = 200(0.80) = 160$ needs to be monitored on average.  Using \autoref{eqn:harmsd}, we solve for an initial guess $n_0$ that satisfies $\log(n_0+1)/(n_0+1) = d/s_{\mon}$, which yields $n_0=162.00$.  In the CT case, we numerically invert \autoref{eqn:pmon} to find $n=  283.15$, and in the CF case, we compute $n= 120.74$.  

 \item $\pcon=0.70$: We again use the initial guess $n_0=162.00$.  In the CT case, we invert \autoref{eqn:pcon} to obtain $n=261.58$, and in the CF case, we compute $n=116.84$.

 \item $\Exp(\cmp) = 4$: From \autoref{eqn:expcmp}, we observe that for the given values of $d$ and $s$, $\Exp(\cmp)$ has a maximum of $15.53$ at $n_{max}=39.49$. In addition, $\Exp(\cmp) = 0$ at $n=0$ and $\lim_{n\rightarrow \infty} \Exp(\cmp) = 0$. Thus, there are two values of $n$ at which $\Exp(\cmp)=4$: $n_{low} \in (0,n_{max})$ and $n_{high} \in (n_{max},\infty)$.   In the CT case, initializing $\mathtt{fsolve}$ with $n_{0,low} =1$ gave $\nor_{low}=4.34$, and initializing with $n_{0,high} = 162.00$ (as for the previous two properties) gave $\nor_{high}=155.74$. In the CF case, these two initial values $n_{0,low}$ and $n_{0,high}$ produced $n_{low} = 4.27$ and $n_{high} = 90.98$, respectively.

 \item $\Exp(\slen) = 0.6s=120$: We have $\psen= 0.6$. In the CT case, inverting  \autoref{eqn:psen} yields $n=111.77$, and in the CF case, we compute $n=79.08$.  
 
 \item $\Exp(\deg) = 5$: In the CT case,  \autoref{eqn:expdeg} can be directly solved for $n=102.26$.  In the CF case, we compute $n=77.93$.
 
\end{enumerate}

Observe that for each property, the $n$ obtained in the CF case is always less than the $n$ in the CT case. This is as expected, since the robots in the CF case can occupy an effective boundary length of $\fsla < \sla$. For this example, we found that defining $n_0$ from $nd = s \log s$ in \autoref{eqn:shthresh}, instead of from \autoref{eqn:harmsd}, yields the same answers as above for both the CF and CT cases.

\section{Coverage with Non-Uniform I.I.D. Parents}
\label{Sec:NonUnif} 

In general, the case where robot positions on $\Bo$ are distributed according to a {\it non-uniform} parent pdf is more computationally complex than the case of a uniform parent pdf.  For instance,  we proved in \cite{KumarArxiv2016,KumarDissert2016} that computing $\pmon$ for a certain class of parent pdfs is \textit{\#P}-Hard. In this section, we extend the results in \autoref{Sec:Unif} and \autoref{Sec:UnifCF} to the case of nonuniform i.i.d. parents  $ \fpar(x)$.  

\subsection{CT Coverage}

\subsubsection{Spatial pdfs of Robot Positions and Slacks}
\label{Subsec:nonspatial}
The joint and marginal pdfs of the ordered robot positions are given by \autoref{eqn:ostat}. The joint pdf of the slacks is 
\begin{align}
\label{eqn:nonuni-jsla}
 \jsla = \fpar(\si{1}) \fpar(\si{1} + \si{2}) \ldots \fpar(\sum_{i=1}^{n} \si{i}) \ind _{\ssimpn}. 
\end{align}
The  slacks in \autoref{eqn:nonuni-jsla} are  not exchangeable, unlike those induced by the uniform parent. Consequently, \autoref{eqn:nonuni-jsla} is more complicated to integrate than \autoref{eqn:ostat-unisla}. We will find the pdf of slack $S_i:=\rvpos{i+1} -\rvpos{i}$ by noting   that the joint pdf of $\rvpos{i}$ and $\rvpos{i+1}$ is \cite[Ch.2]{ostat2003}
\begin{align}
\label{eqn:jointi}
f_{\rvpos{i},\rvpos{i+1}}&(\pos{i},\pos{i+1}) =  \frac{n!}{(i-1)!(n-i-1)!} \times \nonumber \\ 
&\fpar(\pos{i}) \fpar(\pos{i+1}) G^{i-1} (\pos{i})(1- G (\pos{i+1}))^{n-i-1},
\end{align}
so that 
\begin{align}
\label{eqn:slanu}
f_{S_i} (s_i) = \int_{t=0} ^{s} f_{\rvpos{i},\rvpos{i+1}} (t, t+s_i) dt.
\end{align}
Here, $G$ is the CDF of $g$.

\subsubsection{Probabilities $\pmon$, $\pcon$, and $\psen$}
\label{Subsec:nupmon}
The measure of a subset $\favn_{gen}$ of $\ssimpn$ induced by $\jsla$ is given by  $\Vol(\jsla, \favn_{gen})$.  Using Lemma \autoref{thm:vhsp}, we may express this measure in terms of measures over simplices:
\begin{eqnarray}
\label{eqn:vhspnu}
\hspace{-4mm} \Vol(\jsla, \favn_{gen}) \hspace{-2mm} &=& \hspace{-2mm} \sum _{\vvec \in \{0,1 \}^n } (-1)^{\1^T \vvec} \hspace{1mm} \Vol(\jsla,\Delta(\vvec)).
\end{eqnarray}
This expression requires the computation of $O(2^n)$ integrals.  We can obtain the formulas for $\pmon$, $\pcon$, and $\psen$ by replacing $\Vol(\Delta(\vvec))$ by $\Vol(\jsla,\Delta(\vvec))$ in equations \ref{eqn:pmon}, \ref{eqn:pcon} and \ref{eqn:psen} respectively.  For instance, $\pmon$ and $\pcon$ are given by:  
\begin{align}
 \label{eqn:combint}
 \pmon = 1 -  \sum _{\vvec \in \{0,1\}^n : 1\leq \1^T \vvec \leq \nmin } (-1) ^{\1^T \vvec -1}\Vol(\jsla, \unfavn(\vvec)) ,  
 \end{align}
 \begin{align}
   \pcon =   \sum _{\vvec \in \vm(\hyp{\text{con}}) } (-1) ^{\1^T \vvec } \Vol(\jsla,\Delta(\vvec)).
 \end{align}

We may approximate these probabilities using $\Ncon$, $\Nmon$ and $\Nsen$.  We first derive the probabilities that slack $S_i$ is connected and disconnected:
\begin{align}
\label{eqn:pconnu}
 \Prob(S_i \leq d) &= \int_{ s_i=0}^ {d}  f_{S_i} (s_i) ds_i, \nonumber \\
 \Prob(S_i > d) &= 1- \int_{ s_i=0}^ {d}  f_{S_i} (s_i) ds_i.
\end{align}

Now, proceeding as in \autoref{Subsec:paunif}, we have that $\Nmon$ is approximately distributed as $\Poi(\lambda_{\mon})$, where 
\begin{align}
 \lambda_\mon = \sum_{i=1}^{n+1} \Prob (S_i\leq d),
\end{align}
whose approximation error tends to zero when \autoref{eqn:limpoi} holds. We may then approximate $\pmon$ by \autoref{eqn:poipmon}.  Analogous expressions may be derived for $\pcon$ and $\psen$.

\subsubsection{Number of Connected Components, $\cmp$ }
\label{Subsec:nucmp}
By \autoref{eq:expcmp}, we have that
\begin{align}
 \label{eqn:expcmpnu}
 \Exp(\cmp) = 1 + \sum_{i=2}^{n} \Prob(S_i >d),
\end{align}
which may be determined from \autoref{eqn:pconnu}.

\subsubsection{Vertex Degree, $\dgr$}
\label{Subsec:nudeg}
By \autoref{eq:expdgr1}, we can compute  $\Exp(\dgr)$ from $\Prob(|X_i -X_j| \leq \con)$.The joint pdf of two unordered positions can be rewritten in terms of their order statistics:
\begin{align}
f_{\rvpos{1},\rvpos{2}}(\pos{1},\pos{2}) = 2! \fpar(\pos{1}) \fpar(\pos{2}) \ind_{0\leq \pos{1}\leq \pos{2}\leq s}.
\end{align}
We now obtain
\begin{align}
\label{eqn:nonexpdeg}
\hspace{-3mm} \Prob(|X_2 -X_1| \leq \con) = ~& 2 \int_{\pos{1} = 0} ^{s-d}  \int_{\pos{2} = \pos{1}} ^{\pos{1}+d} \fpar(\pos{1}) \fpar(\pos{2}) d \pos{1} d\pos{2}  \nonumber \\
+ ~& 2 \int_{\pos{1} = s-d} ^{d}  \int_{\pos{2} = \pos{1}} ^{s} \fpar(\pos{1}) \fpar(\pos{2}) d \pos{1} d\pos{2},
\end{align}
which cannot be simplified further. 

\subsection{CF Coverage and Approximations of Coverage Properties}
We can obtain the coverage properties for the CF case by replacing $\Vol(\Delta(\vvec))$ by $\Vol(\jsla,\Delta(\vvec))$ in the formulas in \autoref{Sec:UnifCF}.  In contrast to the results in \autoref{Sec:UnifCF}, we would not expect the free slack approximation to hold, because $\fpar$ is not a constant function. It is unlikely that such an approximation exists for all non-uniform parent pdfs; a separate approximation would have to be devised for each parent. Poisson approximations for $\pcon$, $\pmon$, and $\psen$ are derived in \autoref{Subsec:nupmon}. The connectivity threshold and thermodynamic limit that are defined in \autoref{Sec:Estim} apply to $\fpar$  \cite{Matthew2003RGG}. We are unaware of an equivalent to the monitoring threshold of \autoref{eqn:shthresh} for $\fpar$. 
\section{Conclusions and Future Work}
\label{Sec:Conc}

We have presented an approach to characterizing and designing statistical properties of random multi-robot networks that are formed around boundaries by stochastic coverage schemes.  This work will enable the designer of a swarm robotic system to select the number of robots and the robot diameter, sensing range, and communication range that are guaranteed to yield target statistics of sensor coverage or communication connectivity around a boundary.  We have developed and validated this approach for robot configurations that are generated by uniform parent pdfs, which will occur in scenarios where there is a spatially homogeneous density of robots around the boundary.  We also extended the results to scenarios with robots that are distributed according to non-uniform parent pdfs.




We were able to develop these theoretical results because of the simplifying assumptions made in \autoref{Sec:Problem}: robots are homogeneous in terms of their parent pdf $\fpar$, diameter $\dia$, and communication/sensing range $\con$, and they can communicate perfectly within a disk of radius $\con$.  In practice, none of these assumptions may hold; hence, extending our results to scenarios with heterogenous robots and realistic communication is an important direction of our future work.  In addition, we will also consider problems where robots cover an area or volume as opposed to a one-dimensional boundary.  We describe our future work on these topics in more detail below.

\subsection{Heterogeneous $\con$, $\dia$, and parent pdfs}
\label{Subsec:hrange}

Suppose that each robot $\Rob_{i}$ has a distinct communication range $\con_i$.  Now the RGG $\G$ can be connected through \textit{multi-hop} messages, even when there are robots that are unable to communicate directly with \textit{at most one} of their neighbors. For example, consider three robots $\Rob_1$, $\Rob_2$, and $\Rob_3$ whose communication ranges satisfy $\con_{1} > \con_{2}$ and $\con_{3} > \con_{2}$. Suppose further that $\Rob_{1}$, but not $\Rob_{3}$, is within the range of $\Rob_{2}$. Then $\Rob_{2}$ may route packets destined for $\Rob_{3}$ through $\Rob_{1}$, thereby creating a connected $\G$. Such a scenario does not arise when each robot has the same range $\con$.  The inclusion of heterogeneous ranges $\con_i$ entails the loss of symmetry of the favorable region $\favn_{\mon}$ which was exploited in \autoref{Sec:Unif} to compute $\pcon$ in pseudo-polynomial time.  Our work in \cite{KumarDissert2016,KumarArxiv2016} shows that computing the probability that $\G$ is connected through 1-hop messages is $\#P$-Hard. Likewise, when the robots have distinct diameters $\dia_i$, the CF polytope $\ssimp_{CF}$ becomes a simplex with  hypercuboidal holes. Computing $\Vol(\ssimp_{CF})$ thus has the same complexity as finding the volume of a general simplex-hypercuboid intersection. 

The assumption of i.i.d parents is a major simplification that enables the analysis of the RGG $\G$. Since robot positions are not necessarily i.i.d. in general, their joint pdf $\jpos$ does not factorize into marginals as in \autoref{eqn:ostat}. When parent pdfs are distinct, the joint cdf of their order statistics is given by the \textit{Bapat-Beg} theorem, which requires the evaluation of a \textit{matrix permanent} \cite{ostat2003}. Since computing permanents can be $\#P$-Hard in general, determining  the properties of $\G$ becomes intractable for this case as well, forcing us to resort to approximations.

\subsection{Modeling realistic wireless communication}
\label{Subsec:realWC}
Wireless signals are electromagnetic waves, and consequently suffer from phenomena such as \textit{path loss, propagation loss}, and \textit{Rayleigh fading}. Furthermore, multiple Wi-fi transmitters that are placed close together will \textit{interfere} constructively or destructively with each other. These effects have been modeled using general PPPs \cite{haenggi2012stochastic}, but we do not know how these losses will affect an RGG $\G$. Including these effects in our models will involve a tradeoff between model expressiveness and tractability.   

\subsection{Boundary coverage (BC) in $\R^2$ and $\R^3$}
\label{Subsec:BChigh}
The geometric formulation of one-dimensional BC in \autoref{Sec:Geom} yielded closed-form formulas for the volumes of polytopes. We would not expect to be able to derive such formulas for two- and three-dimensional variants of BC, and instead would have to use the theory of RGGs much more extensively than we did here. Developing these results will allow us to design multi-robot systems for stochastic coverage tasks over terrestrial surfaces and within three-dimensional volumes in the air or underwater.

\IEEEpeerreviewmaketitle

\section*{Acknowledgments}
We gratefully acknowledge Dr. Theodore P. Pavlic and Sean Wilson for useful discussions.  
This work was supported in part by the DARPA YFA under Award No. D14AP00054.  Approved for public release; distribution is unlimited.  



\ifCLASSOPTIONcaptionsoff
  \newpage
\fi



%

\bibliographystyle{IEEEtran} 
\bibliography{refs}

\vspace{0cm}

\begin{IEEEbiography}[{\includegraphics[width=1in,height=1.25in,clip,keepaspectratio]{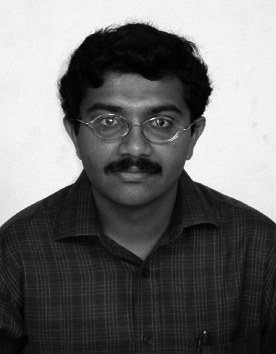}}]{Ganesh P. Kumar}
(M'12) received the B.Tech. degree in computer science from Kerala University, Trivandrum, Kerala, India, in 2004 and the M.Tech. degree in computer science from the International Institute of Information Technology, Hyderabad, India, in 2006.  He received the Ph.D. degree in computer science and engineering from Arizona State University, Tempe, AZ in 2016.

From 2006 to 2010, he held positions as a Software Engineer at CA Technologies and a Senior Software Engineer at Yahoo! and the erstwhile Motorola Mobility Solutions.  He is currently an Assistant Research Scientist in Prof. Spring Berman's group at Arizona State University.  His research focuses on computational problems in swarm robotics, and he is interested in both the theoretical and practical aspects of designing software for robotic swarms.  
  
Dr. Kumar is a member of ACCU, the Association for Computing Machinery (ACM), and the American Helicopter Society (AHS).  He was a recipient of the Government of India's Department of Science and Technology (DST) award for the period 2005-2006.
\end{IEEEbiography}

\vspace{-0.5cm}

\begin{IEEEbiography}[{\includegraphics[width=1in,height=1.25in,clip,keepaspectratio]{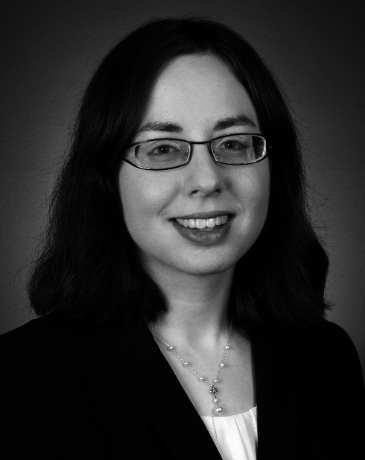}}]{Spring Berman} 
(M'07) received the B.S.E. degree in mechanical and aerospace engineering from Princeton University, Princeton, NJ, in 2005 and the M.S.E. and Ph.D. degrees in mechanical engineering and applied mechanics from the University of Pennsylvania, Philadelphia, PA, in 2008 and 2010, respectively.

From 2010 to 2012, she was a Postdoctoral Researcher in Computer Science at Harvard University, Cambridge, MA.  Since 2012, she has been an Assistant Professor of Mechanical and Aerospace Engineering with the School for Engineering of Matter, Transport and Energy (SEMTE), Arizona State University, Tempe, AZ.  Her research focuses on the modeling and analysis of behaviors in biological and engineered collectives and the synthesis of control strategies for robotic swarms.

Prof. Berman is a recipient of the 2014 Defense Advanced Research Projects Agency Young Faculty Award and the 2016 Office of Naval Research Young Investigator Award.
\end{IEEEbiography}

\end{document}